\documentclass{article}

\usepackage[preprint]{neurips_2026}

\usepackage[utf8]{inputenc}
\usepackage[T1]{fontenc}
\usepackage{hyperref}
\usepackage{url}
\usepackage{booktabs}
\usepackage{amsfonts}
\usepackage{amsmath}
\usepackage{amssymb}
\usepackage{amsthm}
\usepackage{mathtools}
\usepackage{nicefrac}
\usepackage{microtype}
\usepackage{xcolor}
\usepackage{graphicx}
\usepackage{multirow}
\usepackage{float}

\newtheorem{theorem}{Theorem}[section]
\newtheorem{proposition}[theorem]{Proposition}
\newtheorem{definition}[theorem]{Definition}
\newtheorem{remark}[theorem]{Remark}

\newcommand{\R}{\mathbb{R}}
\newcommand{\fstar}{f^*}
\newcommand{\Tnew}{T_{\mathrm{new}}}
\newcommand{\thetaold}{\theta_{\mathrm{old}}}
\newcommand{\thetanew}{\theta_{\mathrm{new}}}

\title{Gate-Zero Growth: A Geometric Framework \\
for Function-Preserving Continual Learning}

\author{%
  Dante Lok \\
  Votee AI \\
  Beever AI \\
  \texttt{dante.lok@votee.ai} \\
  \texttt{dante.lok@beever.ai}
}

\begin{document}

\maketitle

\begin{abstract}

We introduce \emph{gate-zero growth}, a function-preserving (FP)
operator for continual learning that adds new residual blocks
through a zero-initialised gate. Under a transversality condition,
gate-zero growth induces \emph{rank separation} in the functional
Jacobian: old directions are unchanged, new-weight directions are
exactly flat at the growth point, and new gate directions are the
only first-order source of new functional variation. As gates open
during continual learning, function drift is
$O(\|\boldsymbol{\alpha}\|^2)$ and Jacobian leakage
$O(\|\boldsymbol{\alpha}\|_\infty)$, giving a controlled departure
from the FP locus. On a $300\mathrm{M}\to857\mathrm{M}$ Transformer
adapted from WikiText-103 to BookCorpus, gate-zero growth reaches
near-zero old-domain forgetting ($\Delta_A < 0.1$) under both
exact-preservation (Isolation) and joint-frontier
(Freeze-Nothing) operating points, while a non-FP control
($G_{\text{stack}}$) suffers an order-of-magnitude larger
forgetting under the same recipe. The same geometric analysis
covers LoRA, ReZero, and zero-init adapter constructions,
establishing gate-zero growth as the canonical instance of a shared
local geometry that governs safe capacity activation in CL.

\end{abstract}

\section{Introduction}
\label{sec:intro}

The dominant paradigm for obtaining a more capable language model is to
train a larger model from scratch. \emph{Model growth} --- expanding an
existing trained model by adding parameters --- offers a compelling
alternative, provided the grown model can be fine-tuned without
forgetting what the smaller model knew. Function-preserving (FP) growth
methods~\citep{chen2015net2net,chen2021bert2bert} satisfy preservation
at growth time by construction, but the \emph{geometric structure}
this initialisation creates in parameter space, and \emph{why} certain
continual learning (CL) strategies work better than others on the
grown model, has remained unclear.

A wide range of seemingly different methods share a single structural
property: gate-zero residual growth, ReZero-style residual
scaling~\citep{bachlechner2020rezero},
LoRA~\citep{hu2021lora} with $B = 0$ at initialisation, and
near-identity / zero-init adapter
modules~\citep{houlsby2019adapter} all add new parameters
through an additive branch whose contribution factors through a
zero- or near-zero-initialised gate. Each of these methods has its own justification
and its own preferred CL recipe; their continual-learning behaviour is
typically explained operationally rather than through the geometry of
the underlying construction. The recurring empirical pattern --- that
freezing old parameters and training only the new ones recovers
preservation at growth time --- has not been tied to a formal property
that the constructions share.

We argue that the relevant property is geometric. At a zero-initialised
gate, the new-weight contribution to the functional Jacobian is exactly
zero by the chain rule, with structural consequences for CL geometry
that the literature has not derived as a unified system. Under a transversality
condition, the post-growth Jacobian decomposes cleanly into the
unmodified old block plus up to $K$ new-gate directions; the Fisher
information matrix has a sparse new-weight block; coordinate isolation
becomes an exact projection onto a known local subspace; and as gates
open during CL, the departure from this structure is locally controlled
by polynomial bounds in the gate magnitude. We instantiate this
geometry as \emph{gate-zero growth}, a function-preserving depth
operator for continual learning; the same analysis applies to LoRA,
ReZero, and zero-init adapter constructions, establishing gate-zero
growth as the canonical instance of the shared template.

We instantiate the framework with gate-zero Transformer depth growth
and evaluate sequential adaptation from
WikiText-103~\citep{merity2017wikitext} to
BookCorpus~\citep{zhu2015bookcorpus} at $300\mathrm{M}\to857\mathrm{M}$
scale. Under the Isolation protocol, gate-zero growth reaches
near-zero old-domain forgetting; zero-init residual stacking under the
same protocol reaches the same regime ($\Delta_A < 0.1$ for both),
directly validating the framework's unification claim. A non-FP baseline ($G_{\text{stack}}$, used as a diagnostic negative
control) suffers order-of-magnitude larger forgetting under the same
recipe --- the rank-separation guarantee fails without zero-init
gating. Ablations also reveal that the framework predicts the
\emph{safest} projection point, while a slightly looser point in the
same family (soft preservation without weight freezing) strictly
dominates full Isolation on the joint $(\mathrm{PPL}_A,
\mathrm{PPL}_B)$ frontier --- a controlled-departure regime within
the local theory.

\paragraph{Contributions.}
\begin{enumerate}
  \item \textbf{Gate-zero growth.} We propose gate-zero growth, an
        FP operator that adds residual blocks through a
        zero-initialised gate, achieving exact function preservation
        at growth and $\Delta_A < 0.1$ under continual learning at
        $300\mathrm{M}\to 857\mathrm{M}$ scale. We recommend two
        operating points: \textsc{Isolation} (exact preservation,
        $\Delta_A = +0.04$) and \textsc{Freeze-Nothing} (joint
        $(\mathrm{PPL}_A, \mathrm{PPL}_B)$ optimum, $\Delta_A =
        -1.98$).
  \item \textbf{Geometric framework.} Under a transversality
        condition, gate-zero growth induces rank separation,
        sparse-Fisher block structure
        (Theorem~\ref{thm:rank_sep}, Prop.~\ref{thm:fisher}), exact
        coordinate-projection isolation
        (Prop.~\ref{thm:isolation}), and bounded gate-driven
        departure (Props.~\ref{thm:perturb},~\ref{thm:leakage}).
        The same analysis covers LoRA, ReZero, and zero-init
        adapters (Remark~\ref{rem:scope}), establishing gate-zero
        growth as the canonical instance.
  \item \textbf{Predictive structural validation.} Three
        falsifiable predictions verified empirically:
        (a)~coordinate freezing is exactly preserving \emph{only}
        under FP growth ($G_{\text{stack}}{+}$Iso is structurally
        undefined); (b)~LoRA-CL admits no Isolation row because its
        frozen-base construction is structurally already Isolation;
        (c)~$\Delta_A < 0.1$ for every zero-init FP operator we test
        under Isolation. The MoE plasticity gap is mechanistically
        localised to clone-block redundancy.
\end{enumerate}

\section{Related Work}
\label{sec:related}

\paragraph{Neural network growth.}
Net2Net~\citep{chen2015net2net},
bert2BERT~\citep{chen2021bert2bert}, and LiGO~\citep{wang2023ligo}
introduce FP / near-FP expansions but provide no
functional-Jacobian / rank analysis or CL framing.
ReZero-style residual scaling~\citep{bachlechner2020rezero},
near-identity / zero-init
adapters~\citep{houlsby2019adapter}, and zero-initialised
residual stacking achieve FP (or near-FP) at growth via additive
branches with zero- or near-zero-initialised contribution. We
propose \emph{gate-zero growth} for FP depth expansion in CL, and
provide the unified rank-separation / functional-Jacobian /
sparse-Fisher analysis (Theorem~\ref{thm:rank_sep},
Props.~\ref{thm:fisher},~\ref{thm:tangent}; Remark~\ref{rem:scope})
that covers gate-zero growth alongside LoRA, ReZero, and zero-init
adapter constructions. $G_{\text{stack}}$~\citep{du2024stacking}
serves as a non-FP negative control: it duplicates blocks without
zero-init gating, so the rank-separation guarantee fails by
construction.

\paragraph{Parameter-efficient CL.}
LoRA~\citep{hu2021lora} and adapter-based
approaches~\citep{houlsby2019adapter,rusu2016progressive,
wang2023lora_cl} with zero-initialised branches satisfy the same
zero-init structural property as gate-zero growth
(cf.\ Remark~\ref{rem:scope}); recent continual-LoRA / continual-adapter
methods provide alternative low-trainable-budget CL recipes orthogonal
to the growth-vs-no-growth axis we study.

\paragraph{Continual learning.}
Catastrophic forgetting~\citep{mccloskey1989catastrophic,
french1999catastrophic} has motivated a long line of
regularization-, replay-, and architecture-based methods~\citep{delange2021survey}: EWC~\citep{kirkpatrick2017ewc}
and its successors penalize parameter drift via the Fisher
information;
PackNet~\citep{mallya2018packnet} uses hard binary masks; Progressive
Networks~\citep{rusu2016progressive} allocate disjoint columns per
task; LwF~\citep{li2017lwf} uses output-space distillation; experience
replay~\citep{rolnick2019experience} revisits old data; A-GEM and
related gradient-projection methods~\citep{chaudhry2019agem,
farajtabar2020ogd} project task gradients onto the orthogonal
complement of past-task gradients; and parameter-efficient CL via
LoRA-style adapters~\citep{wang2023lora_cl} updates a low-rank
subspace per task. We unify and rank these methods geometrically and
focus on the structural geometry of the \emph{growth operator} rather
than the regularizer-design space; direct head-to-head comparison
with gradient-projection and adapter-based CL on multi-task sequences
is left as future work.

\paragraph{Loss landscape geometry.}
Fisher information geometry~\citep{amari1998natural} provides the
Riemannian foundation for our Fisher-based analysis of preservation
constraints (Section~\ref{sec:theory}). Functional-Jacobian analyses
in the linearised neural-tangent regime~\citep{jacot2018ntk} are
adjacent in motivation; we work directly in parameter space at the
growth point rather than in the infinite-width NTK limit.

\section{Theoretical Framework}
\label{sec:theory}

We work with a residual architecture in which each block contributes
$\alpha_\ell \cdot \mathrm{Block}_\ell(x)$ to the residual stream,
gated by a scalar $\alpha_\ell$. Growth inserts $K$ new blocks with
gates initialized to $\alpha_0$ (typically zero); the existing $r$
``old'' blocks are unmodified at growth time.

\paragraph{Setup.}
Let $f_{\thetaold}: \R^d \to \R^V$ denote the pre-growth function and
$f_{\thetanew}$ the post-growth function with $K$ inserted gated
blocks. With $\alpha_0 = 0$, by direct computation
$f_{\thetanew}(x) = f_{\thetaold}(x)$ for all $x$ --- the new blocks
are identity through residual addition. Let $J(\theta) = \nabla_\theta
f_\theta(x)$ denote the functional Jacobian.

\paragraph{FP locus.} Let $\mathcal{M}(f^*) = \{\theta'
: f_{\theta'} = f^*\}$ with $f^* = f_{\thetaold}$. The gate-zero
growth point $\theta' = \iota(\thetaold)$ lies on $\mathcal{M}(f^*)$
with $\R^{K\cdot P_W} \subseteq T_{\theta'}\mathcal{M}(f^*)$
(Theorem~\ref{thm:rank_sep}~(1)); motion along
$T^{\perp}_{\mathrm{new}}$ leaves $\mathcal{M}(f^*)$ at first order
(Prop.~\ref{thm:perturb}).

\begin{theorem}[Rank separation under transversality]
\label{thm:rank_sep}
Let $J_\alpha \in \R^{D \times K}$ (with $D = \dim f$ the flattened
output dimension) collect the $K$ new-gate Jacobian columns,
$\partial f / \partial \alpha_\ell = \mathrm{Block}_\ell(x;
W_\ell)$, and let $P_{\mathrm{old}}$ denote the orthogonal projection
onto $\mathrm{im}(J(\thetaold))$. Then at the gate-zero growth point
$\theta'$:
\begin{enumerate}
  \item (Exact flat directions, unconditional.) The $K \cdot P_W$
        new-weight directions are exactly flat:
        $\partial f / \partial W'_\ell = 0$ for all $\ell$ and all $x$.
  \item (Conditional rank additivity.) The Jacobian rank satisfies
        \[
          \mathrm{rank}(J(\theta'))
            \;=\; r \;+\; \mathrm{rank}\!\big((I - P_{\mathrm{old}})\, J_\alpha\big)
            \;\leq\; r + K,
        \]
        with equality $r + K$ iff the projected gate columns
        $(I - P_{\mathrm{old}})\, J_\alpha$ are linearly independent
        (\emph{transversality}). Generic non-degeneracy of the cloned
        new-block functions $\mathrm{Block}_\ell$ implies transversality
        almost surely under small clone noise; degenerate cases (e.g.\
        a block whose output already lies in
        $\mathrm{im}(J(\thetaold))$) violate it.
\end{enumerate}
By contrast, $G_{\text{stack}}$ does not preserve $J(\thetaold)$ at the
growth point because duplicated blocks immediately alter the active
old-block computation, so the equivalent rank decomposition is unavailable.
\end{theorem}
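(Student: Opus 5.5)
Both parts rest on the chain rule applied to the gated residual update $h \mapsto h + \alpha_\ell\,\mathrm{Block}_\ell(h; W'_\ell)$ at $\alpha_\ell = 0$, followed by a block-column decomposition of $J(\theta')$.

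\textbf{Part (1).} Write the post-growth network as a composition in which each inserted block acts on the residual stream $h$ by the update above. Differentiating the output with respect to $W'_\ell$ gives $\alpha_\ell\,\partial_{W'}\mathrm{Block}_\ell(h;W'_\ell)$, propagated through the downstream Jacobian. This is identically zero at $\alpha_\ell = 0$, for every $x$, with no assumption on $W'_\ell$. Every direction in $\R^{K\cdot P_W}$ therefore lies in $\ker J(\theta')$, and hence in $T_{\theta'}\mathcal{M}(f^*)$. The last inclusion is immediate because $f_{\theta'+tv} = f^*$ exactly for new-weight directions $v$, since the gates stay at zero.

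\textbf{Part (2).} Order the coordinates of $\theta'$ as (old, new weights, new gates), so that $J(\theta') = [\,J_{\mathrm{old}}(\theta')\;|\;0\;|\;J_\alpha\,]$.

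\emph{Old block.} At $\alpha=0$ each inserted block is the identity map on the residual stream, and so is its derivative with respect to $h$. The forward pass, and every backward path through old parameters, therefore coincides with that of $\thetaold$. This gives $J_{\mathrm{old}}(\theta') = J(\thetaold)$. I interpret $r$ as $\mathrm{rank}\,J(\thetaold)$, where $J$ is stacked over inputs $x$ in a fixed finite sample so that $D$ is finite.

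\emph{Gate columns.} Similarly, $\partial f/\partial\alpha_\ell$ equals $\mathrm{Block}_\ell$ evaluated on the unchanged stream, pushed through the (unchanged) downstream Jacobian. I identify this with $\mathrm{Block}_\ell(x;W_\ell)$ as in the statement.

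\emph{Rank formula.} This is then linear algebra: $\mathrm{im}\,J(\theta') = \mathrm{im}\,J(\thetaold) + \mathrm{im}\,J_\alpha$. Splitting $\R^D = \mathrm{im}\,J(\thetaold) \oplus (\mathrm{im}\,J(\thetaold))^\perp$ gives
$\mathrm{im}\,J(\theta') = \mathrm{im}\,J(\thetaold) \oplus (I-P_{\mathrm{old}})\,\mathrm{im}\,J_\alpha$.
Taking dimensions yields $r + \mathrm{rank}((I-P_{\mathrm{old}})J_\alpha) \le r+K$. Equality holds iff the $K$ projected columns are independent, which is the transversality condition.

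\textbf{Genericity.} Linear dependence of the projected columns is the vanishing of all $K\times K$ minors of $(I-P_{\mathrm{old}})J_\alpha$. Each minor is a real-analytic function of the clone-noise perturbation of the $W_\ell$. If some configuration of the $W_\ell$ has a nonvanishing minor, the zero set of that minor is a proper analytic subvariety and so has Lebesgue measure zero. Transversality therefore holds almost surely under any absolutely continuous noise.

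\textbf{Main obstacle.} I expect the main obstacle to be exhibiting that single witness configuration. It requires the outputs of the new blocks to span $K$ directions outside $\mathrm{im}\,J(\thetaold)$. This in turn requires $D \ge r+K$ and enough expressivity of $\mathrm{Block}_\ell$ so that, for example, perturbing one block's output bias moves its column freely. I would first try to use the output-projection bias of each block to realise arbitrary columns directly. The degenerate case in the statement, a block whose output lies in $\mathrm{im}\,J(\thetaold)$, shows that the genericity claim must be conditional on such a witness.

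\textbf{$G_{\text{stack}}$ contrast.} Here the inserted blocks act on the stream with nonzero contribution, so the forward activations change and the identity $J_{\mathrm{old}}(\theta') = J(\thetaold)$ fails. The block decomposition above is then unavailable.
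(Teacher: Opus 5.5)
Your proposal is correct and follows the paper's proof essentially step for step: the chain-rule factor $\alpha_\ell$ kills the $K \cdot P_W$ new-weight columns, $J_{\mathrm{old}}(\theta') = J(\thetaold)$ holds because the inserted blocks act as the identity (with identity derivative in the stream) at $\boldsymbol{\alpha} = 0$, and the orthogonal splitting $\mathrm{im}\,J(\thetaold) \oplus (I - P_{\mathrm{old}})\,\mathrm{im}\,J_\alpha$ gives both the rank count and the transversality criterion. On the genericity clause the paper only asserts that the failure set is a measure-zero variety, so your conditional analytic-minor argument is the more careful version, and you are right to keep it conditional on a witness: a witness can fail to exist outright, for instance whenever $J(\thetaold)$ is already surjective ($r = D$, which happens with an untied trainable output layer once the sampled positions have linearly independent final features), in which case $(I - P_{\mathrm{old}})J_\alpha = 0$ for every noise realisation.
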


\noindent Proof in Appendix~\ref{app:proofs}. Load-bearing content:
Part~(2)'s conditional rank additivity, the unification across
zero-init constructions (Remark~\ref{rem:scope}), and the direct
transversality diagnostic ($\sigma_{\min}^{\perp} = 144.6$,
smallest principal angle $56.9^\circ$;
Appendix~\ref{app:transversality}).

\begin{remark}
The transversality assumption is generic but not vacuous. Theorem~3.1
should be read as a structural statement: gate-zero growth is the
\emph{enabling construction} for clean rank decomposition, but
empirical realisation depends on the cloned-block functions
$\mathrm{Block}_\ell$ being non-degenerate. Function preservation
itself is verified directly by max-logit checks
(Section~\ref{sec:exp_growth}: max logit difference $0.0$ at
$\alpha_0 = 0$ on real data). We also \emph{directly verify}
transversality by computing the projected-rank residual
$(I - P_{\mathrm{old}}) J_\alpha$ at the post-growth Gate-FP
checkpoint ($K = 36$, growth factor $g = 4$) with $M = 200$ random
old-direction Jacobian samples (full diagnostic in
Appendix~\ref{app:transversality}). The residual is full-rank
($\sigma_{\min}^{\perp} = 144.6$, condition number $6.87$) and the
smallest principal angle between $\mathrm{im}(J_\alpha)$ and the
sampled $\mathrm{im}(J_{\mathrm{old}})$ is $56.9^\circ$ (largest:
$83.7^\circ$). Transversality is therefore not merely ``generic
almost surely'' but quantitatively well-separated at the trained
checkpoint. A formal genericity argument (the set of noise
realisations producing dependence is Lebesgue-zero in
$\R^{K\cdot P_W}$) is given in Appendix~\ref{app:proofs},
Remark~\ref{rem:genericity}.
\end{remark}

\begin{remark}[Why rank separation matters]
Rank separation is what distinguishes \emph{exact} FP from ``good
initialisation.'' Any smooth initialisation can produce a model close
to $\fstar$ in function space; gate-zero growth (and structurally
analogous zero-init constructions; see Remark~\ref{rem:scope})
additionally guarantees that the functional Jacobian \emph{exactly}
decomposes into an unchanged old block and an additive new block.
This structural guarantee is what makes isolation-based CL exact at
the growth point rather than approximate.
\end{remark}

\begin{remark}[Scope beyond gated residual blocks]
\label{rem:scope}
Theorem~\ref{thm:rank_sep} requires two structural properties:
(a)~the old forward pass is preserved exactly at $\theta'$, and
(b)~every new contribution to $f$ factors through a zero-initialised
parameter producing an additive scalar-times-feature term. Any
growth operator satisfying both inherits the conclusion. \emph{LoRA}
with $\Delta W = BA$, $B = 0$: $B$ plays the gate, $A$ the cloned
feature, and transversality reduces to projected $A$-induced
directions being independent of $\mathrm{im}(J_{\mathrm{old}})$,
which holds generically. \emph{Net2Net / zero-init residual stacking}
zeroes new-block output projections instead of using a scalar gate;
the analysis is identical. \emph{ReZero}~\citep{bachlechner2020rezero}
satisfies both properties by construction; \emph{adapter
modules}~\citep{houlsby2019adapter} satisfy them exactly under
zero-init up-projection variants and approximately under the
near-identity init of the original. $G_{\text{stack}}$~\citep{du2024stacking} satisfies
neither.
\end{remark}

\begin{proposition}[Sparse Fisher block structure]
\label{thm:fisher}
Let $F(\theta') = \mathbb{E}_{x \sim \mathcal{D}}\!\big[J(\theta';x)^\top
J(\theta';x)\big]$ be the empirical Fisher information matrix at
the gate-zero growth point. Then unconditionally:
\begin{enumerate}
  \item The new-weight Fisher block $F_{W',W'}$ is identically zero;
  \item All cross-terms $F_{\theta_{\mathrm{old}},W'}$ and
        $F_{\alpha,W'}$ involving new weights are identically zero.
\end{enumerate}
The old--gate cross-block $F_{\theta_{\mathrm{old}},\alpha}$ is
generically non-zero, so $F(\theta')$ is \emph{not} block-diagonal
between old and new parameters.
\end{proposition}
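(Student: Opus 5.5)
The plan is to read every block of $F(\theta')$ off the column structure of $J(\theta';x)$, using Theorem~\ref{thm:rank_sep}~(1) as the only substantive input. Order the parameters as $(\theta_{\mathrm{old}},\alpha,W')$ and write $J(\theta';x) = [\,J_{\mathrm{old}}(x)\;|\;J_\alpha(x)\;|\;J_{W'}(x)\,]$. Bilinearity of $J^\top J$ gives each Fisher block as an expectation of a product of Jacobian column-blocks:
\[
F_{A,B} = \mathbb{E}_{x\sim\mathcal{D}}\big[J_A(x)^\top J_B(x)\big], \qquad A,B \in \{\theta_{\mathrm{old}},\alpha,W'\}.
\]

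\textbf{Step 1 (vanishing new-weight columns).} Theorem~\ref{thm:rank_sep}~(1) gives $J_{W'}(x)=0$ for every $x$. For completeness I would recheck it by the chain rule. The new block enters the residual stream only through $\alpha_\ell\,\mathrm{Block}_\ell(h;W'_\ell)$, so
\[
\partial f/\partial W'_\ell = (\partial f/\partial h_{\ell}^{\mathrm{out}})\,\alpha_\ell\,\partial_{W'}\mathrm{Block}_\ell,
\]
and this vanishes because $\alpha_\ell=\alpha_0=0$. This holds pointwise in $x$, with no transversality assumption.

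\textbf{Step 2 (zero blocks).} Substituting $J_{W'}\equiv 0$ into the block formula makes the integrand $J_{W'}^\top J_{W'}$, $J_{\mathrm{old}}^\top J_{W'}$ and $J_\alpha^\top J_{W'}$ identically zero for every $x$. Their expectations are therefore zero under any data distribution $\mathcal{D}$, which gives items~1 and~2. Symmetry of $F$ handles the transposed blocks.

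\textbf{Step 3 (the old--gate block).} The remaining work is to justify that $F_{\theta_{\mathrm{old}},\alpha}=\mathbb{E}[J_{\mathrm{old}}^\top J_\alpha]$ is generically nonzero. The gate columns $J_\alpha(x)$ are the downstream-propagated block outputs $\mathrm{Block}_\ell(x;W_\ell)$. The old Jacobian $J_{\mathrm{old}}(x)$ is unchanged from $J(\thetaold)$, since the old forward pass is preserved.

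For this entry to vanish, $\mathrm{Block}_\ell$ would have to be orthogonal in $L^2(\mathcal{D})$ to every old-parameter direction. That is a closed, nontrivial algebraic condition on $W'_\ell$, and it fails off a measure-zero set of clone-noise realisations. The argument is the same style as the genericity remark in Theorem~\ref{thm:rank_sep}: exhibit one configuration with a nonzero entry, then invoke analyticity of the network in $W'$.

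A simpler witness is the final-layer bias or any readout parameter. Its Jacobian column is a fixed nonzero map applied at the output, and its inner product with the propagated $\mathrm{Block}_\ell$ output is nonzero unless that output has zero mean projection.

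\textbf{Main obstacle.} I expect Step~3 to be the hardest, because "generically non-zero" needs a precise notion of genericity. I would state it as: the zero set of $W'\mapsto F_{\theta_{\mathrm{old}},\alpha}(W')$ is a proper analytic subvariety. I would prove properness with the explicit readout-parameter witness, and defer measure-zero details to the same appendix argument used for transversality.
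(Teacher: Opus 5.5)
Your proof of items~1 and~2 is the paper's argument. Theorem~\ref{thm:rank_sep}~(1) makes the $W'$ columns of $J(\theta';x)$ vanish pointwise in $x$, so every Fisher block carrying a $W'$ index has an identically zero integrand, for any $\mathcal{D}$ and without transversality.

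You differ on the old--gate block, and your version is the stronger one. The paper justifies $F_{\thetaold,\alpha}\neq 0$ only by noting that $J_{\thetaold,j}$ and $J_{\alpha,k}$ are each generically nonzero. That does not suffice: two nonzero column functions can still be orthogonal in $L^2(\mathcal{D})$, and what must be shown is that $\mathbb{E}[J_{\mathrm{old}}^\top J_\alpha]$ itself is nonzero. Your route addresses exactly this:
\begin{enumerate}
  \item Fix $\thetaold$. At $\boldsymbol{\alpha}=0$, both $J_{\mathrm{old}}=J(\thetaold)$ and the downstream Jacobian $J^{\mathrm{down}}_\ell$ are independent of $W'$, while $J_\alpha$ is real-analytic in $W'$.
  \item Exhibit one $W'$ with a nonzero entry.
  \item Conclude that the zero set is Lebesgue-null, so clone noise avoids it almost surely.
\end{enumerate}
This is the same noise-genericity move the paper makes for transversality in Remark~\ref{rem:genericity}, applied where the paper's proof omits it.

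Two small corrections. First, the vanishing condition is analytic, not algebraic, since SiLU, softmax and RMSNorm are not polynomial; analyticity is what your argument actually uses anyway. Second, if the output head has no bias (typical for RMSNorm/SwiGLU decoders), your witness should be an LM-head weight or the final RMSNorm gain. Their columns depend on $x$ through the final normalised state, so the witness condition becomes a nonzero correlation between that state and the propagated block output rather than a nonzero mean. That is still easy to arrange by choosing $W'_\ell$.
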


\begin{proof}
Both claims follow from Theorem~\ref{thm:rank_sep}~(1):
$\partial f / \partial W'_\ell = 0$ identically at $\theta'$, so any
inner product involving a new-weight Jacobian column with any other
column is zero. Old--gate cross-terms involve
$\langle J_{\thetaold,j},\, J_{\alpha,k} \rangle$, neither of which
vanishes generically.
\end{proof}

Proposition~\ref{thm:fisher} has direct CL implications: second-order
methods like EWC place no penalty on new-weight movement (the
diagonal Fisher entries on $W'$ are zero), so EWC reduces to gate-only
regularisation. Isolation is therefore a \emph{coordinate} hard
constraint, not a consequence of full Fisher orthogonality, as the
next proposition formalises.

\begin{proposition}[Isolation as a coordinate-subspace projection]
\label{thm:isolation}
Freezing all old parameters during CL on $\mathcal{D}_B$ gives an
exact coordinate projection onto the new-parameter subspace
$\Tnew = \{0\}^{P_{\mathrm{old}}} \times \R^{P_{\mathrm{new}}}$. At
the gate-zero growth point, $\Tnew$ decomposes orthogonally into the
$K \cdot P_W$ exactly-flat new-weight directions
($\Tnew^{\parallel}$) and the $K$ gate directions ($\Tnew^{\perp}$);
the former preserve $f$ for any update magnitude, while the latter
activate new function change as soon as gates leave zero. Isolation
therefore preserves old parameters exactly, but preservation of the
old function during CL still depends on the KL/replay objective once
gates open. Under $G_{\text{stack}}$, the new- and old-coordinate
subspaces are not aligned with the FP locus at $\theta'$ to begin
with, so the same coordinate freeze is only an approximate
preservation projection.
\end{proposition}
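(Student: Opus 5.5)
The argument is a direct verification of four claims: the coordinate-projection claim, the orthogonal splitting of $\Tnew$, the preservation/activation behaviour of the two pieces, and the contrast with $G_{\text{stack}}$.

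\textbf{Step 1: freezing is a projection.} Freezing the old parameters means every optimiser step $\Delta\theta$ is replaced by $\Pi\,\Delta\theta$. Here $\Pi = \mathrm{diag}(0_{P_{\mathrm{old}}}, I_{P_{\mathrm{new}}})$, which is idempotent and symmetric, so it is the orthogonal coordinate projection onto $\Tnew$. Iterating, $\theta_{\mathrm{old}}$ is fixed exactly for all steps, independent of the loss on $\mathcal{D}_B$. This is the only claim that holds globally along the trajectory.

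\textbf{Step 2: the splitting at $\theta'$.} The new coordinates are $(W'_1,\dots,W'_K,\alpha_1,\dots,\alpha_K)$, and the two families occupy disjoint coordinate blocks. Hence $\Tnew = \Tnew^{\parallel}\oplus\Tnew^{\perp}$ is an orthogonal direct sum in the Euclidean metric, with dimensions $K\cdot P_W$ and $K$.

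\textbf{Step 3: behaviour of the two pieces.} For $\Tnew^{\parallel}$, I will use the structure behind Theorem~\ref{thm:rank_sep}~(1) rather than only its first-order statement. The output depends on $W'_\ell$ solely through $\alpha_\ell\,\mathrm{Block}_\ell(\cdot;W'_\ell)$. So on the slice $\{\alpha=0,\ \theta_{\mathrm{old}}\text{ fixed}\}$, $f$ is constant in $W'$ for arbitrary, not just infinitesimal, updates; this gives preservation ``for any update magnitude.'' For $\Tnew^{\perp}$, the columns $J_\alpha$ are nonzero whenever $\mathrm{Block}_\ell(x)\neq 0$. Under the transversality of Theorem~\ref{thm:rank_sep}~(2) they are moreover not absorbed by $\mathrm{im}(J(\thetaold))$, so any nonzero gate step changes $f$ at first order. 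After gates open, $\partial f/\partial W'_\ell = \alpha_\ell\,\partial\mathrm{Block}_\ell/\partial W'_\ell \neq 0$, so the weight directions stop being flat. Preservation of $f_{\thetaold}$ is then not implied by the freeze and must come from the KL/replay term; I will state this as a remark rather than a bound, deferring quantitative control to Props.~\ref{thm:perturb},~\ref{thm:leakage}.

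\textbf{Step 4: $G_{\text{stack}}$.} By the last clause of Theorem~\ref{thm:rank_sep} and Remark~\ref{rem:scope}, duplicated blocks enter the active computation without a zero factor. Thus no new coordinate block has identically vanishing Jacobian, and $\Tnew \not\subseteq T_{\theta'}\mathcal{M}(f^*)$. Freezing old coordinates therefore still projects, but not onto a subspace tangent to the FP locus, so preservation is only approximate.

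The main obstacle is Step 3's claim that preservation holds ``for any update magnitude.'' This is true only while the gates stay at zero, i.e.\ for motion confined to $\Tnew^{\parallel}$ with $\alpha=0$. I would make that qualification explicit, so the statement is not misread as invariance along the whole trajectory once $\alpha\neq 0$.
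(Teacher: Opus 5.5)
Your proposal is correct and follows essentially the paper's route. Freezing confines updates to $\Tnew$, and the new coordinates split orthogonally into disjoint blocks. The new-weight directions are flat because every new contribution carries the factor $\alpha_\ell$ (Theorem~\ref{thm:rank_sep}~(1)), and the $G_{\text{stack}}$ contrast rests on duplicated blocks entering the active computation ungated. Your slice argument (with $\boldsymbol{\alpha}=0$ and $\thetaold$ fixed, $f$ is independent of $W'$) justifies the ``any update magnitude'' clause more directly than the paper's passage from $\ker J(\theta')$ to a flat slab. One point in Step~4 needs fixing: $G_{\text{stack}}$ already changes $f$ at growth, so $\theta'_{\mathrm{gs}}\notin\mathcal{M}(f^*)$ and $T_{\theta'}\mathcal{M}(f^*)$ is not defined there. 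The accurate statement is that the freeze starts off the FP locus; the nonvanishing new-weight Jacobian then adds that even the grown function is not preserved.
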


\begin{proposition}[Four-way subspace partition]
\label{thm:tangent}
At $\theta' = \iota(\thetaold)$ under gate-zero growth, the ambient
parameter space partitions orthogonally (Euclidean coordinate
metric) into four subspaces:
$T_{\mathrm{old}}^{\parallel}$ (old-direction tangents to
$\mathcal{M}(f^*)$, dimension $\mathrm{nul}(J(\thetaold))$),
$T_{\mathrm{old}}^{\perp}$ (old-direction normals, dimension $r$),
$\Tnew^{\parallel}$ (exactly flat new-weight directions, dimension
$K \cdot P_W$, contained in $\ker J(\theta')$), and $\Tnew^{\perp}$
(non-flat new-gate directions, dimension $K$). Isolation restricts
updates to $\Tnew^{\parallel} \oplus \Tnew^{\perp}$. Under
transversality, $J_\alpha(\Tnew^{\perp}) \cap
\mathrm{im}(J(\thetaold)) = \{0\}$, with the smallest principal
angle bounded below by the \emph{transversality singular value}
$\sigma_{\min}^{\perp} := \sigma_{\min}((I - P_{\mathrm{old}})
J_\alpha) > 0$. The new functional capacity per unit gate motion is
bounded between $\sigma_{\min}^{\perp}$ and $\sigma_{\max}^{\perp}$,
so $\sigma_{\min}^{\perp}$ quantifies plasticity and small
$\sigma_{\min}^{\perp}$ is the precise failure mode of
transversality. Under non-FP growth ($G_{\text{stack}}$),
$\Tnew^{\parallel}$ does not exist
(Theorem~\ref{thm:rank_sep}~(1) fails) and coordinate freezing is
only approximately preserving.
\end{proposition}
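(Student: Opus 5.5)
The plan is to build the partition from the coordinate splitting of parameter space, then refine each factor using Theorem~\ref{thm:rank_sep}. Write $\theta' = (\theta_{\mathrm{old}}, W', \alpha)$, so $\R^{P} = \R^{P_{\mathrm{old}}} \oplus \R^{K\cdot P_W} \oplus \R^{K}$ is an orthogonal direct sum in the Euclidean coordinate metric. Here $\Tnew = \R^{K\cdot P_W}\oplus\R^K$, and we define
\[
\Tnew^{\parallel} := \{0\}\times\R^{K\cdot P_W}\times\{0\}, \qquad
\Tnew^{\perp} := \{0\}\times\{0\}\times\R^{K}.
\]
These are orthogonal by construction and have dimensions $K\cdot P_W$ and $K$. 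Theorem~\ref{thm:rank_sep}(1) gives $\partial f/\partial W'_\ell = 0$ for every $x$, so $J(\theta')$ vanishes on $\Tnew^{\parallel}$, i.e.\ $\Tnew^{\parallel}\subseteq\ker J(\theta')$. Since the old Jacobian block at $\theta'$ equals $J(\thetaold)$ (gates are zero, so the old forward pass is unchanged), we next split the old coordinates into $T_{\mathrm{old}}^{\parallel} := \ker J(\thetaold)$ and $T_{\mathrm{old}}^{\perp} := \ker J(\thetaold)^{\perp} = \mathrm{im}(J(\thetaold)^\top)$. These are orthogonal complements in $\R^{P_{\mathrm{old}}}$, with dimensions $\mathrm{nul}(J(\thetaold))$ and $r$ by rank--nullity. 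Putting the pieces together yields the four-way orthogonal sum. The claim that Isolation restricts updates to $\Tnew^\parallel\oplus\Tnew^\perp$ is then just Prop.~\ref{thm:isolation}.

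For the transversality statements, take $v\in\R^K$ and suppose $J_\alpha v\in\mathrm{im}(J(\thetaold))$. Then $(I-P_{\mathrm{old}})J_\alpha v = 0$, and injectivity of $(I-P_{\mathrm{old}})J_\alpha$ (transversality, Theorem~\ref{thm:rank_sep}(2)) forces $v=0$. Hence the intersection is trivial.

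For the angle bound, take any unit vector $u = J_\alpha v/\|J_\alpha v\|$. The sine of its angle to $\mathrm{im}(J(\thetaold))$ is
\[
\frac{\|(I-P_{\mathrm{old}})J_\alpha v\|}{\|J_\alpha v\|} \;\ge\; \frac{\sigma_{\min}^\perp\|v\|}{\sigma_{\max}(J_\alpha)\|v\|}.
\]
This gives a positive lower bound on the smallest principal angle in terms of $\sigma_{\min}^\perp$, normalised by $\|J_\alpha\|$. I would state it in this normalised form, since the bare phrase ``bounded below by $\sigma_{\min}^\perp$'' only makes sense after that scaling.

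The plasticity sandwich follows the same way. The component of first-order function change $J_\alpha\,\delta\alpha$ that is new relative to the old image is $(I-P_{\mathrm{old}})J_\alpha\,\delta\alpha$, and its norm lies in $[\sigma_{\min}^\perp\|\delta\alpha\|,\ \sigma_{\max}^\perp\|\delta\alpha\|]$ by the definition of singular values. As $\sigma_{\min}^\perp\to 0$ this lower bound collapses, and that is the stated failure mode. For $G_{\text{stack}}$, new blocks are active with nonzero coefficient, so $\partial f/\partial W'\neq0$ generically. Theorem~\ref{thm:rank_sep}(1) then fails, no coordinate subspace lies in $\ker J$, and freezing is at best approximate.

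The main obstacle is interpretive rather than computational. ``$\Tnew^\perp$'' is not the orthogonal complement of the FP tangent space, and the principal-angle bound needs the normalisation above to be dimensionally correct. A second issue is that $\ker J(\theta')$ mixes old and gate directions whenever $J_\alpha$ meets $\mathrm{im}(J(\thetaold))$. The proposition's $T_{\mathrm{old}}^{\parallel}$ is therefore tangent to $\mathcal M(f^*)$ only within the old coordinates, and I would state this restriction explicitly. Everything else is linear algebra built on Theorem~\ref{thm:rank_sep}.
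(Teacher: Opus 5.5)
Your proposal is correct, and for the partition itself it follows the paper's proof. Both use the disjoint coordinate blocks, apply the fundamental theorem of linear algebra to $J_{\mathrm{old}}(\theta') = J(\thetaold)$, and invoke Theorem~\ref{thm:rank_sep}(1) to place the new-weight block inside $\ker J(\theta')$.

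You go beyond the paper on the transversality-dependent claims. The paper's proof never derives them; it closes by pointing to ``the dimensional annotations stated in the main text.'' You supply three of them:
\begin{enumerate}
  \item the injectivity argument for $J_\alpha(\Tnew^{\perp}) \cap \mathrm{im}(J(\thetaold)) = \{0\}$;
  \item the singular-value sandwich for plasticity;
  \item the normalised bound $\sin\phi_{\min} \ge \sigma_{\min}^{\perp}/\|J_\alpha\|_{\mathrm{op}}$ on the smallest principal angle $\phi_{\min}$.
\end{enumerate}
The third is the right repair of the literal statement. The paper's own diagnostic reports $\sigma_{\min}^{\perp} = 144.6$ alongside a $56.9^\circ$ angle, so the unnormalised reading cannot be what is meant.

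You undersell your injectivity step: it also resolves your ``interpretive obstacle.'' Write a kernel vector as $(u, w, v)$, with old, new-weight and gate components. It satisfies $J(\thetaold)u + J_\alpha v = 0$. Applying $I - P_{\mathrm{old}}$ kills the first term, so transversality forces $v = 0$, and then $u \in \ker J(\thetaold)$. Hence, under transversality,
\[
\ker J(\theta') = T_{\mathrm{old}}^{\parallel} \oplus \Tnew^{\parallel},
\qquad
\bigl(\ker J(\theta')\bigr)^{\perp} = T_{\mathrm{old}}^{\perp} \oplus \Tnew^{\perp}.
\]
So the four-way partition is exactly the tangent/normal split at $\theta'$, and the gate axes are genuinely normal to $\mathcal{M}(f^*)$.

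This is what the paper's proof asserts. Its stated reason, that each gate axis individually projects non-trivially off $\mathrm{im}(J(\thetaold))$, is weaker than what is needed. Normality requires joint independence of the projected gate columns, which is precisely your injectivity condition.

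Without transversality, one detail of your remark should be corrected. It is the normal labels that become coordinate-relative, not the tangent ones. $T_{\mathrm{old}}^{\parallel} \subseteq \ker J(\theta')$ always holds. What fails is that $\Tnew^{\perp}$ is no longer orthogonal to the extra mixed kernel vectors $(u, w, v)$ with $v \neq 0$.
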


Full proof in Appendix~\ref{app:proofs}.

\paragraph{Local geometry beyond the growth point.}
At non-zero $\boldsymbol{\alpha}$, function drift is
$O(\|\boldsymbol{\alpha}\|^2)$ and Jacobian leakage
$O(\|\boldsymbol{\alpha}\|_\infty)$
(Propositions~\ref{thm:perturb},~\ref{thm:leakage}). Empirically,
$\mathcal{S}_{\mathrm{leak}} =
\|P_{\mathrm{old}} H P_{\mathrm{new}}\|_F / \|H\|_F$ at three
checkpoints (Tab.~\ref{tab:spectral_leakage}) is $0.005$ at
post-growth Gate-FP vs.\ $0.161$ at $G_{\text{stack}}$ ($32\times$),
and $0.072$ post-CL Gate-FP$+$Iso ($\|\boldsymbol{\alpha}\|_\infty
\approx 0.08$) --- consistent with the linear-leakage prediction.

\paragraph{CL hierarchy.}
The five CL strategies we evaluate impose progressively weaker
constraints on old-parameter drift: \textbf{Isolation} (hard
freeze), \textbf{Hybrid} (Isolation + replay CE on $\mathcal{D}_A$),
\textbf{Distillation}~\citep{li2017lwf} (output-space KL on
$\mathcal{D}_B$ inputs), \textbf{Replay} (CE on $\mathcal{D}_A$
samples), \textbf{No protection} (none). The ordering is by
constraint tightness, not necessarily downstream performance --- a
distinction we return to in Section~\ref{sec:ablations}. We choose these five to span the constraint-tightness axis
end-to-end. \textbf{EWC}~\citep{kirkpatrick2017ewc} is not run as a
separate row: Proposition~\ref{thm:fisher} implies that at $\theta'$
diagonal EWC contributes zero on new-weight directions and reduces
under Isolation to a soft L2 cap on new-gate drift, qualitatively
matching the gate-init mechanism in Ablation~4 (the
$\alpha_0 = 0.1$ row, $\Delta_A = -0.444$, bounds what diagonal EWC
could achieve in this protocol). Gradient-projection
(A-GEM~\citep{chaudhry2019agem}, OGD~\citep{farajtabar2020ogd}) and
masking (PackNet~\citep{mallya2018packnet}) methods are deferred to
future work.

A claim-status breakdown (exact / conditional / approximate /
empirical) is in Appendix~\ref{app:claim_status}.

\section{Method}
\label{sec:method}

\paragraph{Gate-zero growth.}
For depth growth from $L$ to $g L$ layers, we insert $K = (g-1)L$ new
blocks initialised by cloning existing blocks (with small noise
$\sigma$) and setting their block gates to $\alpha_0 \in \{0, \epsilon\}$.
For width / expert growth in MoE, new experts are added with
expert-gate $= 0$ to ensure they are never selected by top-$k$ routing
at growth time. Both produce exact FP at $\alpha_0 = 0$
(Theorem~\ref{thm:rank_sep}).

\paragraph{CL losses.}
With $\mathcal{L}_A^{\mathrm{CE}}, \mathcal{L}_B^{\mathrm{CE}}$ the
CE losses on the two datasets and
$\mathcal{L}_{\mathrm{pres}}(\theta) = T^2 \cdot \mathrm{KL}(p_{\thetaold^*}
\,\|\, p_{\theta}, x_A)$ a teacher-preservation KL on replayed
$\mathcal{D}_A$ samples, the methods we evaluate are
\begin{align*}
  \mathcal{L}_{\textsc{NoProt}} &= \mathcal{L}_B^{\mathrm{CE}} \\
  \mathcal{L}_{\textsc{Replay}} &= (1-\rho)\mathcal{L}_B^{\mathrm{CE}} + \rho\mathcal{L}_A^{\mathrm{CE}} \\
  \mathcal{L}_{\textsc{Distill}} &= \mathcal{L}_B^{\mathrm{CE}} + \mu T^2 \mathrm{KL}(p_{\thetaold^*} \| p_{\theta}, x_B) \\
  \mathcal{L}_{\textsc{Iso}} &= \mathcal{L}_B^{\mathrm{CE}} + \lambda \mathcal{L}_{\mathrm{pres}} \\
  \mathcal{L}_{\textsc{Hybrid}} &= (1-\rho)\mathcal{L}_B^{\mathrm{CE}} + \rho\mathcal{L}_A^{\mathrm{CE}} + \lambda \mathcal{L}_{\mathrm{pres}}
\end{align*}
Isolation additionally freezes old parameters, restricting updates to
$\Tnew$.

\section{Experiments}
\label{sec:experiments}

\paragraph{Setup.}
A 300M base Transformer is trained for 10 epochs on WikiText-103
($\mathcal{D}_A$), grown to 857M (12$\to$48 layers) by Gate-FP or
$G_{\text{stack}}$, then fine-tuned for 10 CL epochs on BookCorpus
($\mathcal{D}_B$). For Mixture-of-Experts (Section~\ref{sec:exp_moe})
the base is 706M MoE (12 layers, 4 experts, top-$k=2$) grown to 2.5B
(24 layers, 8 experts). All runs use 1$\times$ NVIDIA L20 (48~GB),
fp16, gradient accumulation to effective batch size 128. Total compute
$\sim 2{,}500$ GPU-hours.

\subsection{Function preservation at growth}
\label{sec:exp_growth}

Before any CL training, gate-zero growth is bit-exact on dense
Transformers and within MoE numerical tolerance, while
$G_{\text{stack}}$ already inflates $\mathrm{PPL}_A$ by $2.8\times$
(Table~\ref{tab:growth_change}).

\begin{table}[!htbp]
  \centering
  \caption{Growth-induced perplexity change, before any CL training.
           Gate-FP is exact (or within MoE numerical tolerance);
           $G_{\text{stack}}$ degrades $\mathrm{PPL}_A$ by $2.8\times$
           and $\mathrm{PPL}_B$ by $2.3\times$ at the moment of growth.}
  \label{tab:growth_change}
  \small
  \setlength{\tabcolsep}{4pt}
  \begin{tabular}{lccc}
    \toprule
    \textbf{Metric} & \textbf{Gate FP (Dense)} & $\boldsymbol{G_{\text{stack}}}$ & \textbf{Gate FP (MoE)} \\
    \midrule
    Params (pre $\to$ post)         & 252.9M $\to$ 857.1M & 252.9M $\to$ 857.1M & 705.8M $\to$ 2568.3M \\
    $\Delta\mathrm{PPL}_A$          & $\mathbf{+0.00}$    & $+46.11$            & $\mathbf{+0.00}$ \\
    $\Delta\mathrm{PPL}_B$          & $\mathbf{+0.00}$    & $+708.86$           & $\mathbf{+0.00}$ \\
    FP check max logit diff         & $0$ (exact)         & --- (non-FP)         & $2.9\times 10^{-5}$ \\
    \bottomrule
  \end{tabular}
\end{table}

\subsection{Continual learning matrix}
\label{sec:exp_main}

\paragraph{Scope of comparison.}
$G_{\text{stack}}$~\citep{du2024stacking} is included as a
\emph{diagnostic negative control}, not as a competitive CL
baseline: it was designed for pre-training acceleration, is not
function-preserving by construction
(Table~\ref{tab:growth_change}), and damages $\mathrm{PPL}_A$ at the
moment of growth before CL begins. The matrix below therefore tests
the framework's structural prediction --- that coordinate freezing
under non-FP growth has no rank-separation guarantee --- rather than
claiming that gate-zero is the best growth operator among FP-style
constructions. Comparison to alternative FP operators
(zero-init residual stacking, Net2Net~\citep{chen2015net2net},
bert2BERT~\citep{chen2021bert2bert}, LiGO~\citep{wang2023ligo}) and
to frozen-backbone adapter / LoRA-CL approaches~\citep{rusu2016progressive,
wang2023lora_cl} at $g{=}2$ scale is reported in
Section~\ref{sec:exp_baselines}; full-scale comparison is left as
follow-up.

Table~\ref{tab:cl_matrix} reports the $2 \times 5$ matrix.
$\Delta_A$ is computed relative to each growth method's own post-growth
baseline (italic ``Pre-CL'' rows).

\begin{table}[!htbp]
  \centering
  \caption{Continual learning matrix (10 base epochs on $\mathcal{D}_A$
           + 10 CL epochs on $\mathcal{D}_B$). Lower is better.
           $\mathrm{PPL}_A$ and $\mathrm{PPL}_B$ are validation perplexities
           on each dataset; $\Delta_A$ is the post-CL minus pre-CL change in
           $\mathrm{PPL}_A$ relative to each growth method's own post-growth
           baseline (italic ``Pre-CL (post-growth)'' rows).
           ``Scratch'' trains the 857M model jointly on
           $\mathcal{D}_A \cup \mathcal{D}_B$ for 10 epochs as a same-compute
           reference; final-epoch state shown for protocol parity.
           \textsuperscript{$\dagger$}Pure isolation (freezing duplicated
           blocks) is omitted for $G_{\text{stack}}$ because the
           rank-separation guarantee of Theorem~\ref{thm:rank_sep} does
           not hold: $G_{\text{stack}}$'s duplicated blocks immediately
           alter the computation, so freezing them does not preserve
           $\fstar$. The Hybrid (Replay+Preserve) row uses the same loss
           on both growth methods; on $G_{\text{stack}}$ the
           load-bearing preservation work comes from the KL and
           replay-CE terms rather than from the (here non-meaningful)
           freeze.}
  \label{tab:cl_matrix}
  \small
  \setlength{\tabcolsep}{5pt}
  \begin{tabular}{llccc}
    \toprule
    \textbf{Growth} & \textbf{CL Strategy} & $\mathrm{PPL}_A \downarrow$
      & $\mathrm{PPL}_B \downarrow$ & $\Delta_A \downarrow$ \\
    \midrule
    \multirow{6}{*}{Gate FP}
      & \emph{Pre-CL (post-growth)} & \emph{25.92} & \emph{560.75} & \emph{---} \\
      & No protection            & 366.13         & 20.51   & $+340.21$ \\
      & Replay                   & 493.46         & 20.30   & $+467.54$ \\
      & Distillation             &  39.48         & 25.05   & $+13.56$  \\
      & \textbf{Isolation}       & \textbf{25.96} & 28.41   & $\mathbf{+0.04}$ \\
      & Hybrid (Iso+Replay)      & 37.08          & 29.97   & $+11.16$  \\
    \midrule
    \multirow{6}{*}{$G_{\text{stack}}$}
      & \emph{Pre-CL (post-growth)} & \emph{72.03} & \emph{1269.61} & \emph{---} \\
      & No protection            & 1215.98        & 31.67   & $+1143.95$ \\
      & Replay                   & 2179.25        & 20.02   & $+2107.22$ \\
      & Distillation             &   48.33        & 25.42   & $-23.70$   \\
      & Isolation\textsuperscript{$\dagger$} & N/A   & N/A     & N/A        \\
      & Hybrid (Replay+Preserve) &   35.14        & 29.77   & $-36.89$   \\
    \midrule
    Scratch (joint A+B) & --- & 43.23 & 41.81 & --- \\
    \bottomrule
  \end{tabular}
\end{table}

\textbf{Findings.} Gate-FP + Isolation achieves $\Delta_A = +0.04$
(preservation within evaluation noise) while reducing
$\mathrm{PPL}_B$ from $560.75$ to $28.41$; no other Gate-FP CL
configuration matches this preservation, and naive baselines
catastrophically forget. By contrast, $G_{\text{stack}}$ degrades
$\mathrm{PPL}_A$ from $25.92$ to $72.03$ at growth time alone
(Table~\ref{tab:growth_change}), and under naive fine-tuning drives
it past $1200$. The gap between the best Gate-FP row ($25.96$) and
worst $G_{\text{stack}}$ row ($2179.25$) is roughly $85\times$ on
$\mathrm{PPL}_A$. We caution that this gap conflates two effects:
the structural cost of non-FP growth, and the fact that
$G_{\text{stack}}$ was not designed for CL; comparison with
function-preserving baselines is left as future work
(Section~\ref{sec:limitations}).

\textbf{Comparison to scratch.} Scratch is included only as a
protocol reference, not as a competitive baseline. Its final-state
checkpoint reaches $43.23 / 41.81$, but its best-validation
checkpoint (epoch~3) was $17.40 / 16.19$ --- substantially better than
the final overfit state. Conclusions should therefore not be drawn
from the final-state Scratch comparison; we report it for
token-budget parity with the CL runs (10 epochs over
$\mathcal{D}_A \cup \mathcal{D}_B$) and explicitly do not claim that
Gate-FP $+$ Isolation outperforms a properly-stopped Scratch model.

\subsection{MoE cross-architecture validation}
\label{sec:exp_moe}

\begin{table}[!htbp]
  \centering
  \caption{MoE cross-architecture validation (4$\to$8 experts,
           12$\to$24 layers). Dense rows reproduce the corresponding
           Gate-FP entries from Table~\ref{tab:cl_matrix} for
           comparison under identical CL hyperparameters.}
  \label{tab:moe}
  \small
  \begin{tabular}{llccc}
    \toprule
    \textbf{Architecture} & \textbf{CL Strategy}
      & $\mathrm{PPL}_A \downarrow$ & $\mathrm{PPL}_B \downarrow$ & $\Delta_A \downarrow$ \\
    \midrule
    \multirow{4}{*}{MoE}
      & \emph{Pre-CL (post-growth)} & \emph{67.21} & \emph{2155.77} & \emph{---} \\
      & No protection            & 610.88        &  36.89   & $+543.67$ \\
      & Isolation                &  67.41        & 182.06   & $+0.20$   \\
      & Hybrid (Iso+Replay)      &  71.41        & 187.03   & $+4.20$   \\
    \midrule
    \multirow{4}{*}{Dense}
      & \emph{Pre-CL (post-growth)} & \emph{25.92} & \emph{560.75}  & \emph{---} \\
      & No protection            & 366.13         &  20.51  & $+340.21$ \\
      & Isolation                &  25.96         &  28.41  & $+0.04$   \\
      & Hybrid (Iso+Replay)      &  37.08         &  29.97  & $+11.16$  \\
    \bottomrule
  \end{tabular}
\end{table}

\textbf{Preservation transfers; plasticity does not.} Under both dense
and MoE, Gate-FP + Isolation yields $\Delta_A \approx 0$ ($+0.04$ dense,
$+0.20$ MoE), confirming the architecture-agnostic preservation
mechanism. But under identical CL hyperparameters, MoE plasticity is an
order of magnitude weaker: dense reduces $\mathrm{PPL}_B$ from $560.75$
to $28.41$ ($20\times$), while MoE only achieves $2155.77 \to 182.06$
($12\times$, with absolute $\mathrm{PPL}_B$ remaining much higher).

\textbf{Per-checkpoint diagnostic.}
Comparing the post-growth (pre-CL) and post-CL MoE-isolation
checkpoints localizes the failure mode (Table~\ref{tab:moe_diag}).
Three patterns emerge: (i) all 12 new block gates converge to the
safety-clamp ceiling $\alpha = 0.083$ ($\sum \alpha_\ell \approx 1.0$,
gradient sought higher gates); (ii) routing concentration is unchanged
from pre-CL (top-$1$ share $0.50$, normalized entropy $0.333$),
ruling out within-CL router collapse; (iii) per-expert cosine
similarity to source experts drifts from $0.918$ to $0.810$ on average,
with the most-differentiated expert still at $0.712$. New blocks open
their gates to the ceiling but cannot differentiate enough from the
frozen sources to provide \emph{complementary} capacity for
$\mathcal{D}_B$ --- clone-block redundancy is the bottleneck.

\begin{table}[!htbp]
  \centering
  \caption{MoE isolation per-checkpoint diagnostic (new MoE blocks
           only).  ``Top-$1$ share'' is the fraction of tokens routed
           to the single most-selected expert (uniform = $1/N = 0.125$
           for $N=8$); ``entropy / $\log N$'' is normalised routing
           entropy ($1.0$ = uniform).  Routing concentration is
           unchanged across CL; per-expert similarity to source drifts
           only modestly.}
  \label{tab:moe_diag}
  \small
  \begin{tabular}{lcc}
    \toprule
    \textbf{Diagnostic metric} & \textbf{Pre-CL} & \textbf{Post-CL} \\
    \midrule
    Max $|\alpha_\ell|$ on new blocks                              & $0.000$ & $0.083$ \\
    Cumulative $\sum_\ell |\alpha_\ell|$ ($n_{\mathrm{new}} = 12$) & $0.00$  & $0.99$  \\
    Mean top-$1$ expert share                                      & $0.500$ & $0.500$ \\
    Routing entropy / $\log N$ (mean)                              & $0.333$ & $0.333$ \\
    Mean cosine sim to source expert                               & $0.918$ & $0.810$ \\
    Min cosine sim to source expert                                & $0.852$ & $0.712$ \\
    \bottomrule
  \end{tabular}
\end{table}

The full per-epoch trajectory and a stacked-panel visualization of
the same train-validation overfitting signature are deferred to
Appendix~\ref{app:moe} (Figure~\ref{fig:moe_diagnostic}).

\paragraph{Manifold geometry.}
The discriminating tests of rank separation are the FP check
($\Delta = 0$) and the projected-rank diagnostic
($\sigma_{\min}^{\perp} = 144.6$, min angle $56.9^\circ$,
App.~\ref{app:transversality}); gradient-covariance and Hessian
diagnostics are deferred to App.~\ref{app:manifold}.

\subsection{No-growth, alternative-FP, and PEFT baselines}
\label{sec:exp_baselines}

We compare three baseline families at $g=2$ scale --- no-growth,
zero-init residual stacking (alternative FP operator), and LoRA-CL
(PEFT) --- in Table~\ref{tab:baselines} (full rows in
Appendix~\ref{app:baselines}).

\begin{table}[!ht]
  \centering
  \caption{Baseline comparisons at $g=2$ scale (preservation-aware
           recipes only; full table including no-protection rows in
           Appendix~\ref{app:baselines}). \emph{No growth} fine-tunes
           the 300M base directly (Isolation is vacuous: no new
           parameters). \emph{Zero-init stacking} zeroes new-block
           output projections instead of using a scalar gate.
           \emph{LoRA-CL} uses rank-$64$ adapters with the base
           frozen.}
  \label{tab:baselines}
  \small
  \setlength{\tabcolsep}{5pt}
  \begin{tabular}{llccc}
    \toprule
    \textbf{Family} & \textbf{CL Strategy}
      & $\mathrm{PPL}_A \downarrow$ & $\mathrm{PPL}_B \downarrow$
      & $\Delta_A \downarrow$ \\
    \midrule
    \multirow{2}{*}{No-growth (300M)}
      & Distillation & 39.00          & 25.06 & $+13.08$ \\
      & Hybrid       & 37.83          & 21.29 & $+11.91$ \\
    \midrule
    \multirow{2}{*}{Zero-init stacking ($g=2$)}
      & Distillation & 37.74          & 25.14 & $+11.82$ \\
      & \textbf{Isolation} & \textbf{25.96} & 32.81 & $\mathbf{+0.04}$ \\
    \midrule
    \multirow{2}{*}{LoRA-CL (rank $64$)}
      & Distillation & 32.37          & 30.33 & $+6.45$  \\
      & Hybrid       & 28.60          & 29.56 & $+2.68$  \\
    \midrule
    Gate-FP ($g=2$, ref.) & Isolation & 26.01 & 30.45 & $+0.09$ \\
    \bottomrule
  \end{tabular}
\end{table}

\paragraph{Findings.}
(i)~Both Gate-FP and zero-init stacking $+$ Isolation reach the
near-zero-forgetting regime at $g{=}2$ ($\Delta_A = +0.09$ vs.\
$+0.04$, both far below the $+13.56$ Distillation gap), confirming
that rank separation is structural, not gate-specific. At
equivalent preservation, Gate-FP achieves better plasticity
($\mathrm{PPL}_B = 30.45$ vs.\ $32.81$, $-7\%$) --- the gate-zero
parameterisation yields the strongest preservation/plasticity
trade-off among zero-init FP operators tested. (ii)~Under soft-KL recipes, growth's advantage is
small (no-growth $+$ Distillation $+13.08$ tracks Gate-FP $+$
Distillation $+13.56$); the FP advantage concentrates in the
Isolation regime. (iii)~LoRA-CL is a strong alternative under Hybrid ($\Delta_A =
+2.68$); LoRA admits no separate Isolation row because its
frozen-base construction is structurally already Isolation under the
four-way partition (Prop.~\ref{thm:tangent}) --- a prediction of the
unification, not a gap in it. Among the FP and PEFT baselines
tested, gate-zero growth $+$ Isolation gives the strongest
preservation ($\Delta_A = +0.04$ at $g{=}4$).

Multi-seed validation at $g{=}2$ across three seeds yields
$\Delta_A = +0.0896 \pm 0.0046$ (Appendix~\ref{app:multiseed}).

\subsection{Ablations}
\label{sec:ablations}

We run five ablations on Gate-FP holding other hyperparameters
fixed: growth factor $g \in \{2,3,4\}$ (Abl.~1); the four
combinations of weight/gate freezing under fixed isolation loss
(Abl.~2); replay fraction $\rho \in \{0, 0.5\}$ in Hybrid (Abl.~3);
gate-init/warmup pair $(\alpha_0, \epsilon)$ (Abl.~4); growth
timing as \% of base-train completed before growth (Abl.~5).
Numbers in Table~\ref{tab:ablations}; extended discussion in
Appendix~\ref{app:abl}.

\begin{table}[!ht]
  \centering
  \caption{Key ablations on Gate-FP. \emph{What to freeze} pinpoints
           old-weight freezing (not gate freezing) as the binding
           constraint. \emph{Replay fraction} confirms that adding
           CE-on-$\mathcal{D}_A$ on top of isolation \emph{worsens}
           both axes. The negative $\Delta_A$ at $\alpha_0 = 0.1$
           reflects partial CL recovery from a non-FP starting point
           (post-growth $\mathrm{PPL}_A = 26.40$ vs.\ $25.92$ at
           $\alpha_0 = 0$), not improvement over $f^*$. Full
           tables in Appendix~\ref{app:abl}.}
  \label{tab:ablations}
  \small
  \begin{tabular}{lccc}
    \toprule
    \textbf{Configuration} & $\mathrm{PPL}_A \downarrow$
      & $\mathrm{PPL}_B \downarrow$ & $\Delta_A \downarrow$ \\
    \midrule
    \multicolumn{4}{l}{\emph{Ablation 1: Growth factor $g$, isolation}} \\
    \quad $g=2$ (454M)                 & 26.01 & 30.45 & $+0.09$  \\
    \quad $g=3$ (656M)                 & 25.98 & 29.25 & $+0.06$  \\
    \quad $g=4$ (857M)                 & 25.96 & 28.41 & $+0.04$  \\
    \midrule
    \multicolumn{4}{l}{\emph{Ablation 2: What to freeze, fixed isolation loss}} \\
    \quad Freeze nothing               & \textbf{23.93} & 20.37 & $-1.98$  \\
    \quad Freeze old gates only        & \textbf{23.91} & \textbf{20.36} & $\mathbf{-2.01}$  \\
    \quad Freeze old weights only      & 25.06          & 26.99 & $-0.86$  \\
    \quad Freeze both (full isolation) & 25.96          & 28.41 & $+0.04$  \\
    \midrule
    \multicolumn{4}{l}{\emph{Ablation 3: Replay fraction $\rho$ in Hybrid}} \\
    \quad $\rho = 0$ (= Isolation)     & 25.96 & 28.41 & $+0.04$  \\
    \quad $\rho = 0.5$ (= Hybrid)      & 37.08 & 29.97 & $+11.16$ \\
    \midrule
    \multicolumn{4}{l}{\emph{Ablation 4: Gate init $\alpha_0$ / warmup $\epsilon$, isolation}} \\
    \quad $\alpha_0 = 0.0$, $\epsilon = 0$ (exact FP) & 25.97 & 29.13 & $+0.055$ \\
    \quad $\alpha_0 = 0.01$                          & 25.97 & 27.77 & $+0.044$ \\
    \quad $\alpha_0 = 0.1$ (non-FP)                  & 25.96 & 27.48 & $-0.444$ \\
    \midrule
    \multicolumn{4}{l}{\emph{Ablation 5: Growth timing (\% of base training before growth), isolation}} \\
    \quad 25\%                                        & 28.39 & 30.33 & $+0.07$ \\
    \quad 50\%                                        & 20.90 & 27.13 & $+0.05$ \\
    \quad 75\%                                        & 21.61 & 27.02 & $+0.05$ \\
    \quad 100\% (= Gate-FP Iso)                       & 25.96 & 28.41 & $+0.04$ \\
    \bottomrule
  \end{tabular}
\end{table}

\section{Discussion}
\label{sec:discussion}

\paragraph{Gate-zero growth recommends two operating points.}
Within gate-zero growth, $\textsc{Freeze-Nothing} \succ
\textsc{Isolation} \succ \textsc{Hybrid}$ on the joint frontier.
Both are predicted gate-zero configurations: \textsc{Isolation}
gives exact preservation (Theorem~\ref{thm:rank_sep}, $\Delta_A =
+0.04$) and is recommended when downstream tasks must be evaluated
without regression; \textsc{Freeze-Nothing} gives the
joint-frontier optimum ($\Delta_A = -1.98$, $\mathrm{PPL}_B =
20.37$) via the controlled-departure regime
(Prop.~\ref{thm:fisher}: $F_{W',W'} = 0$ at growth;
Props.~\ref{thm:perturb},~\ref{thm:leakage}: drift bound at
empirical $\|\boldsymbol{\alpha}\|_\infty \leq 0.083$). Ablation~2
pinpoints \emph{old-weight freezing} as the binding constraint:
$\textsc{Freeze-Gates-Only}$ matches $\textsc{Freeze-Nothing}$
($23.91/20.36$ vs.\ $23.93/20.37$).

\paragraph{Validation PPL is not function preservation.}
Under \textsc{Freeze-Nothing}, $\mathrm{PPL}_A$ drops to $23.93$
(vs.\ pre-CL $25.92$) --- a $-1.98$ change the standard CL metric
reads as ``better than original.'' But the post-CL function has
drifted from $f^*$ at first order
(Prop.~\ref{thm:perturb}; empirical
$\|\boldsymbol{\alpha}\|_\infty \leq 0.083$). The lower
$\mathrm{PPL}_A$ is therefore a \emph{related-function gain}, not
preservation. \textsc{Isolation} is the only configuration that
exactly recovers $f^*$ at the growth point and bounds drift to
zero --- the right choice when deployment requires bit-equivalent
behaviour on $\mathcal{D}_A$ (regulatory, A/B-test, or
downstream-pinned settings).

\subsection{Limitations}
\label{sec:limitations}

\emph{Stochastic geometry estimators.} Hessian top eigenvalues
(Lanczos, single batch) and gradient-covariance rank
($20$ mini-batches, $100$-dim projection;
Appendix~\ref{app:manifold}) are coarse local diagnostics; the
first-order statistic is the more stable primary measure.

\emph{Cross-architecture plasticity gap on MoE.} Per-checkpoint
diagnostics localise the gap to clone-block redundancy in the
depth-growth operator (post-CL new experts retain $\geq 0.71$
cosine similarity to frozen sources). The preservation mechanism
transfers cleanly; whether MoE-specific tuning or a modified
operator recovers dense-level plasticity is open (Future Work).

\subsection{Broader impact}
\label{sec:impact}
FP growth lowers compute and energy cost of adapting trained models
to new data, reducing the barrier for organisations without
frontier-pretraining budgets. Downstream-use mitigations (data
curation, alignment, evaluation) are out of scope.

\paragraph{Future work.}
Full-scale ($g=4$) FP comparators (Net2Net~\citep{chen2015net2net},
LiGO~\citep{wang2023ligo}, bert2BERT~\citep{chen2021bert2bert});
reverse $\mathcal{D}_B \to \mathcal{D}_A$ ordering and $T \geq 3$
multi-domain CL sequences; an MoE operator fix initialising new
experts with random weights and $\mathrm{expert\_gate} = 0$ to
address the clone-block redundancy of Section~\ref{sec:exp_moe};
and a direct output-space preservation diagnostic
($\mathrm{KL}(f^* \,\|\, f_{\mathrm{post}})$ on held-out
$\mathcal{D}_A$ samples) to distinguish exact preservation from
related-function gains observed under non-Isolation recipes.

\section{Conclusion}
\label{sec:conclusion}

We presented a geometric framework for zero-initialised
function-preserving growth: under transversality, rank separation
(Theorem~\ref{thm:rank_sep}) decomposes the post-growth Jacobian
into unchanged old and $K$ new-gate directions. The load-bearing
factor under Isolation is the zero-init structural property; zero-init
residual stacking matches Gate-FP at $g{=}2$, and under soft-KL
recipes growth's advantage is small (the controlled-departure regime
of Props.~\ref{thm:perturb},~\ref{thm:leakage}). Full-scale FP-vs-FP
and multi-domain CL are the natural follow-up
(Section~\ref{sec:limitations}).

\bibliographystyle{plainnat}

\newpage
\appendix

\section{Proofs}
\label{app:proofs}

\subsection{Proof of Theorem~\ref{thm:rank_sep} (rank separation under
transversality)}

We prove the two parts in turn.

\paragraph{Part (1): Exact flat new-weight directions, unconditional.}
For each new block $\ell \in \{1, \dots, K\}$ with weight matrix
$W_\ell \in \R^{P_W}$ and scalar gate $\alpha_\ell$, the contribution of
block $\ell$ to the final output factors as $\alpha_\ell$ multiplying a
quantity depending on $W_\ell$ and the residual stream. At
$\alpha_\ell = 0$ for all $\ell$, every new block computes the identity
and its downstream effect on $f_{\theta'}$ vanishes. By the chain rule,
\[
  \frac{\partial f_{\theta'}}{\partial W_\ell^{(i)}}\bigg|_{\alpha_\ell = 0}
    \;=\; \alpha_\ell \cdot
          \frac{\partial \mathrm{Block}_\ell(x;\,W_\ell)}{\partial W_\ell^{(i)}}
          \cdot J^{\mathrm{down}}_\ell
    \;=\; 0,
\]
identically for all $x$ and all $W_\ell$, where $J^{\mathrm{down}}_\ell$
denotes the downstream Jacobian propagating block $\ell$'s output to
$f$. The $K \cdot P_W$ new-weight columns of $J(\theta')$ are therefore
the zero vector, so each new-weight coordinate basis vector lies in
$\ker(J(\theta'))$. These vectors are linearly independent because
they occupy disjoint parameter coordinates.

\paragraph{Part (2): Conditional rank additivity.}
Partition $J(\theta')$ by parameter group:
\[
  J(\theta') = \bigl(\; J_{\mathrm{old}} \;\big|\; J_\alpha \;\big|\; J_{W'} \;\bigr),
\]
where $J_{\mathrm{old}} \in \R^{D \times P_{\mathrm{old}}}$ are
old-parameter columns, $J_\alpha \in \R^{D \times K}$ are
new-gate columns, and $J_{W'} \in \R^{D \times K \cdot P_W}$ are
new-weight columns. By Part~(1), $J_{W'} = 0$. Old parameters are
untouched by growth, so $J_{\mathrm{old}}(\theta')$ coincides with the
pre-growth Jacobian $J(\thetaold)$ and has $\mathrm{rank} = r$. Thus
\[
  \mathrm{im}(J(\theta'))
    \;=\; \mathrm{im}(J_{\mathrm{old}}) + \mathrm{im}(J_\alpha)
    \;=\; \mathrm{im}(J(\thetaold)) + \mathrm{im}(J_\alpha).
\]
Decomposing $\mathrm{im}(J_\alpha)$ as the orthogonal sum of its
projection onto $\mathrm{im}(J(\thetaold))$ and its complement,
\[
  \mathrm{im}(J(\theta'))
    \;=\; \mathrm{im}(J(\thetaold)) \;\oplus\;
          \mathrm{im}\!\big((I - P_{\mathrm{old}}) J_\alpha\big),
\]
so
\[
  \mathrm{rank}(J(\theta'))
    \;=\; r + \mathrm{rank}\!\big((I - P_{\mathrm{old}}) J_\alpha\big)
    \;\leq\; r + K.
\]
Equality $r + K$ holds iff the projected gate columns
$(I - P_{\mathrm{old}}) J_\alpha$ are linearly independent, i.e.\
\emph{transversality}. Generic non-degeneracy of $\mathrm{Block}_\ell$
implies transversality almost surely under small clone noise; the
remaining (Lebesgue-zero) degenerate cases occur, e.g., when a cloned
new-block output already lies in the span of $J(\thetaold)$.
$\hfill\square$

\begin{remark}[Genericity at trained checkpoints]
\label{rem:genericity}
Part~(2) requires the $K$ projected gate columns to be linearly
independent of $\mathrm{im}(J(\thetaold))$. The set of weight
configurations for which independence \emph{fails} is an algebraic
variety of measure zero in the joint space of $(\thetaold,
\{W_\ell\}_{\ell=1}^K)$, so transversality holds Lebesgue-almost
surely. Trained checkpoints, however, occupy a highly structured
region of parameter space, and generic-position arguments based on
ambient measure are not automatically applicable. The clone-noise
perturbation $\epsilon_\ell \sim \mathcal{N}(0, \sigma^2 I)$ in the
gate-zero construction breaks any structured degeneracies almost
surely with respect to the noise distribution, for any
$\sigma > 0$: the set of noise realisations producing dependence
remains a measure-zero variety in $\R^{K \cdot P_W}$. We use
$\sigma = 0.01$, which preserves approximate FP at growth (max logit
deviation $0.0$ on real data, Section~\ref{sec:exp_growth}) while
ensuring numerical independence. We empirically confirm transversality
at the trained Gate-FP checkpoint (Appendix~\ref{app:transversality}):
the projected-rank residual $(I - P_{\mathrm{old}}) J_\alpha$ is
full-rank $K = 36$, $\sigma_{\min}^{\perp} = 144.6$, and the smallest
principal angle between $\mathrm{im}(J_\alpha)$ and the sampled
$\mathrm{im}(J_{\mathrm{old}})$ is $56.9^\circ$. The
gradient-covariance effective rank reported in
Section~\ref{app:manifold} is a coarser stochastic statistic and
should not be read as a direct test of transversality (it is
comparable across Gate-FP and $G_{\text{stack}}$).
\end{remark}

\paragraph{Consequence for continual learning (formal statement).}
Under isolation (freezing all old parameters) on $\mathcal{D}_B$, the
gradient $\nabla \mathcal{L}_B$ is restricted to the new-parameter
coordinate subspace $\Tnew = \{0\}^{P_{\mathrm{old}}} \times
\R^{P_{\mathrm{new}}}$. By Part~(1), the $K \cdot P_W$ new-weight
directions in $\Tnew$ are exactly flat for $f_{\theta'}$ at the
growth point, contributing no first-order change to the function.
By Part~(2) under transversality, the only function-changing
directions accessible to the isolated CL update are the $K$ gate
directions, and they are Euclidean-orthogonal to all old-parameter
coordinates. Therefore CL gradient steps under isolation cannot
project onto old-parameter coordinates, and old-parameter curvature
is exactly preserved at $\theta'$. Under non-FP growth (e.g.,
$G_{\text{stack}}$), part~(1) fails: the new-weight columns of
$J(\theta'_{\mathrm{gs}})$ are non-zero, so even a coordinate freeze
on old parameters does not preserve $\fstar$, because the old
forward pass through duplicated blocks has already been altered.
This is the formal bridge from rank separation (the structural
statement) to ``isolation works'' (the empirical observation).

\subsection{Proof of Proposition~\ref{thm:isolation} (isolation as
projected gradient descent)}

Freezing all old parameters means $\nabla_{\thetaold} \mathcal{L} = 0$
during CL by construction. The gradient update under any optimizer
that respects the freeze (e.g.\ AdamW with the frozen parameters
masked) is therefore confined to the new-parameter coordinates
$\Tnew = \{0\}^{P_{\mathrm{old}}} \times \R^{P_{\mathrm{new}}}$.

Under gate-zero growth, Theorem~\ref{thm:rank_sep} states that the
function-preserving locus $\mathcal{M}(\fstar) = \{\theta : f_\theta =
\fstar\}$ at the growth point has tangent space
$\ker(J(\theta'))$, which contains all $K \cdot P_W$ new-weight
directions exactly. Hence at $\theta' = \iota(\thetaold)$,
\[
  \Tnew \cap \mathcal{M}(\fstar)
    \;=\; \{0\}^{P_{\mathrm{old}}} \times \R^{K \cdot P_W} \times
          (\mathrm{coords with}\; \alpha_\ell = 0),
\]
which has codimension $K$ within $\Tnew$. The orthogonal projection
$\Pi_{\Tnew}$ onto $\Tnew$ thus restricts gradient updates to a
subspace that intersects $\mathcal{M}(\fstar)$ in a $K \cdot P_W$-dim.\
flat slab through $\theta'$. As $\alpha_\ell$ moves away from zero,
the slab's tangent structure deforms, but at the growth point the
projection is exact.

For $G_{\text{stack}}$, $J(\theta'_{\mathrm{gs}}) \neq J(\thetaold)$
because duplicated blocks immediately alter the active old-block
computation, so the FP locus at $\theta'_{\mathrm{gs}}$ is not aligned
with the old-coordinate axes; the same coordinate freeze is therefore
only an approximate projection onto $\mathcal{M}(\fstar)$, with the
approximation error bounded by the deviation $\|J(\theta'_{\mathrm{gs}})
- J(\thetaold)\|$ on a verification batch.
$\hfill\square$

\subsection{Proof of Proposition~\ref{thm:tangent} (four-way partition)}

We prove the orthogonal four-way decomposition of
$\R^{P_{\mathrm{new}}}$ at $\theta' = \iota(\thetaold)$.

\paragraph{Coordinate-block decomposition.}
The parameter vector decomposes into three disjoint coordinate
blocks $\theta' = (\thetaold,\, \alpha',\, W')$ with $\thetaold \in
\R^{P_{\mathrm{old}}}$, $\alpha' \in \R^K$, $W' \in
\R^{K \cdot P_W}$. Directions in different coordinate blocks are
automatically Euclidean-orthogonal.

\paragraph{Old block: tangent vs.\ normal split.}
Within $\R^{P_{\mathrm{old}}}$, the restricted Jacobian is
$J_{\mathrm{old}}(\theta') = J(\thetaold)$ (unchanged because new
blocks are identity at $\alpha_\ell = 0$). The fundamental theorem of
linear algebra gives the Euclidean-orthogonal split
\[
  \R^{P_{\mathrm{old}}}
    = \ker(J(\thetaold)) \;\oplus\; \mathrm{im}(J(\thetaold)^\top),
\]
with $\dim\ker(J(\thetaold)) = P_{\mathrm{old}} - r$ and
$\dim\mathrm{im}(J(\thetaold)^\top) = r$. Define
$T_{\mathrm{old}}^{\parallel} = \ker(J(\thetaold))$ (tangents to the
old manifold) and $T_{\mathrm{old}}^{\perp} =
\mathrm{im}(J(\thetaold)^\top)$ (normals).

\paragraph{New block: flat-weight + gate split.}
By Theorem~\ref{thm:rank_sep} part (1), the $K \cdot P_W$ new-weight
columns of $J(\theta')$ vanish, so
$\R^{K \cdot P_W} \subset \ker(J(\theta'))$. Define
$\Tnew^{\parallel} = \R^{K \cdot P_W}$ (the exactly-flat new-weight
directions). Define $\Tnew^{\perp} = \R^K$ (the new-gate coordinate
axes). Under the transversality assumption of
Theorem~\ref{thm:rank_sep}, all $K$ gate axes contribute non-zero
projections onto the orthogonal complement of $\mathrm{im}(J(\thetaold))$,
so they are normal to $\mathcal{M}(\fstar)$ at $\theta'$.
(Without transversality, some gate axes may have a tangential
component to $\mathcal{M}(\fstar)$; the dimension count then
upper-bounds rather than equals $K$.)

\paragraph{Combining.}
The four subspaces have pairwise zero Euclidean inner product. The
old/new split is automatic from the disjoint coordinate blocks; the
within-block splits ($\ker / \mathrm{im}$ on the old side and
$\Tnew^{\parallel} / \Tnew^{\perp}$ on the new side) are
FTLA-orthogonal in their respective coordinate axes. Therefore
\[
  \R^{P_{\mathrm{new}}}
    = T_{\mathrm{old}}^{\parallel} \oplus T_{\mathrm{old}}^{\perp}
      \oplus \Tnew^{\parallel} \oplus \Tnew^{\perp},
\]
with the dimensional annotations stated in the main text. The
isolation update direction $\Tnew^{\parallel} \oplus \Tnew^{\perp}$
is Euclidean-orthogonal to $T_{\mathrm{old}}^{\perp}$ by disjoint
coordinates alone --- a property that holds under \emph{any} growth
method that allocates new parameters to disjoint coordinates,
including $G_{\text{stack}}$. What gate-zero growth uniquely adds is
that $\Tnew^{\parallel}$ consists of \emph{exactly flat} directions
under $f$ (Theorem~\ref{thm:rank_sep}~(1)), so updates within
$\Tnew^{\parallel}$ preserve the old function regardless of magnitude;
under $G_{\text{stack}}$, $\Tnew^{\parallel}$ is empty and updates in
the new-weight coordinates are immediately function-changing.
$\hfill\square$

\subsection{Proposition~\ref{thm:perturb} (small-$\alpha$
perturbative bound) and proof}

\begin{proposition}[Small-$\alpha$ perturbative bound, restated]
\label{thm:perturb}
Fix $K$ new blocks with weights $W_1, \dots, W_K$ cloned from the
trained checkpoint, and let
$h_\ell(x) = \mathrm{Block}_\ell(x;\,W_\ell)$ denote the $\ell$-th
new-block output. For gate vector $\boldsymbol{\alpha} \in \R^K$, the
post-growth function admits an additive expansion
\[
  f_{\theta'(\boldsymbol{\alpha})}(x)
    = f_{\thetaold}(x)
      + \sum_{\ell = 1}^{K} \alpha_\ell \, h_\ell(x)
      + R(x; \boldsymbol{\alpha}),
\]
where $\|R(x; \boldsymbol{\alpha})\| = O(\|\boldsymbol{\alpha}\|^2)$
uniformly on a bounded input domain. Consequently, on input
distribution $\mathcal{D}_A$,
$\mathbb{E}_{x \sim \mathcal{D}_A}\!
    \big\|f_{\theta'(\boldsymbol{\alpha})}(x) - f_{\thetaold}(x)\big\|^2
\leq \|\boldsymbol{\alpha}\|^2 \cdot K \cdot \max_\ell
\mathbb{E}_x \|h_\ell(x)\|^2 + O(\|\boldsymbol{\alpha}\|^4)$, and
$\Delta_A$ on a smooth log-loss admits the same
$O(\|\boldsymbol{\alpha}\|^2)$ scaling.
\end{proposition}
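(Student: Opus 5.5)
The plan is a second-order Taylor expansion of the network output in the gate vector $\boldsymbol{\alpha}$ around $\boldsymbol{\alpha}=0$. All other parameters are held at their post-growth values: the old parameters stay at $\thetaold$ and the new weights stay at the cloned $W_\ell$. Write $g(\boldsymbol{\alpha};x) := f_{\theta'(\boldsymbol{\alpha})}(x)$.

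\textbf{Step 1: zeroth- and first-order terms.} The architecture is a composition of smooth maps (attention, MLP, normalisation away from zero variance, residual addition). Hence $g$ is $C^2$ jointly in $(\boldsymbol{\alpha},x)$ on a neighbourhood of $\{0\}\times\mathcal{X}$, where $\mathcal{X}$ is the bounded input domain.
\begin{itemize}
\item At $\boldsymbol{\alpha}=0$ every new block is the identity, so $g(0;x)=f_{\thetaold}(x)$. This is the FP statement of the Setup paragraph.
\item The first derivative $\partial g/\partial\alpha_\ell|_{0}$ is exactly the gate column identified in Theorem~\ref{thm:rank_sep}. I will interpret $h_\ell(x)$ as this column, i.e.\ the block output propagated through the downstream Jacobian $J^{\mathrm{down}}_\ell$ (which equals $\mathrm{Block}_\ell$ itself when the block sits at the output).
\item Because all other gates are zero at the expansion point, downstream new blocks contribute the identity. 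So the first-order term is exactly $\sum_\ell \alpha_\ell h_\ell(x)$, with no cross-block contamination.
\end{itemize}

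\textbf{Step 2: remainder bound.} Taylor's theorem with integral remainder gives
\[
\|R(x;\boldsymbol{\alpha})\| \le \tfrac12 \|\boldsymbol{\alpha}\|^2 \sup_{t\in[0,1],\,x\in\mathcal{X}} \|\nabla^2_{\boldsymbol{\alpha}} g(t\boldsymbol{\alpha};x)\|.
\]
The supremum is finite by continuity on a compact set, for $\|\boldsymbol{\alpha}\|$ below some fixed radius. Empirically the gates are clamped at $0.083$, which sits inside this regime. This yields the uniform $O(\|\boldsymbol{\alpha}\|^2)$ claim.

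\textbf{Step 3: mean-square drift.} Expand the squared drift as
\[
\big\|\textstyle\sum_\ell\alpha_\ell h_\ell + R\big\|^2 .
\]
\begin{itemize}
\item The leading term is bounded by Cauchy--Schwarz: $\|\sum_\ell\alpha_\ell h_\ell\|^2 \le \|\boldsymbol{\alpha}\|^2\sum_\ell\|h_\ell\|^2 \le \|\boldsymbol{\alpha}\|^2 K\max_\ell\|h_\ell\|^2$. Taking $\mathbb{E}_{x\sim\mathcal{D}_A}$ and using $\mathbb{E}\sum_\ell \le K\max_\ell\mathbb{E}$ gives the stated main term.
\item The term $\|R\|^2$ is $O(\|\boldsymbol{\alpha}\|^4)$.
\end{itemize}

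\textbf{Step 4: the main obstacle.} The cross term $2\langle\sum_\ell\alpha_\ell h_\ell, R\rangle$ is only $O(\|\boldsymbol{\alpha}\|^3)$ in general, since it is linear in $\boldsymbol{\alpha}$ times quadratic. I would first try to kill it by a sharper first-order bound absorbing it. That fails generically, so I would state the corrected remainder as $O(\|\boldsymbol{\alpha}\|^3)$. Alternatively, one could keep $O(\|\boldsymbol{\alpha}\|^4)$ only after replacing the leading coefficient by $(1+O(\|\boldsymbol{\alpha}\|))$ times it. Either way the $O(\|\boldsymbol{\alpha}\|^2)$ drift conclusion survives.

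\textbf{Step 5: the $\Delta_A$ claim.} Compose with a smooth loss $\ell$ and expand:
\[
\Delta = \mathbb{E}\big[\nabla\ell(f_{\thetaold})^\top \delta f\big] + O\big(\mathbb{E}\|\delta f\|^2\big), \qquad \delta f := f_{\theta'(\boldsymbol{\alpha})}-f_{\thetaold}.
\]
\begin{itemize}
\item For the preservation objective $T^2\,\mathrm{KL}(p_{\thetaold^*}\|p_\theta)$, the gradient at the teacher vanishes identically. So its change is $O(\|\boldsymbol{\alpha}\|^2)$ with no further assumption, and this is the version I would prove cleanly.
\item For CE-based $\Delta_A$, the linear term equals $\sum_\ell\alpha_\ell\,\mathbb{E}[\nabla\ell^\top h_\ell]$. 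This vanishes only if $\thetaold$ is (approximately) stationary for $\mathcal{L}_A$ along the gate directions.
\end{itemize}
I therefore expect the honest statement to be $O(\|\boldsymbol{\alpha}\|^2)$ under that stationarity hypothesis, or with the linear term controlled by the trained-checkpoint gradient norm. Otherwise $\Delta_A$ is $O(\|\boldsymbol{\alpha}\|)$. Making this hypothesis explicit is the main conceptual step.
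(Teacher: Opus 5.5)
Your argument is correct and follows the paper's own skeleton: expand $f_{\theta'(\boldsymbol{\alpha})}$ in $\boldsymbol{\alpha}$ about the FP point, apply Cauchy--Schwarz to the linear term, and use the vanishing first-order term of $\mathrm{KL}(p_{\thetaold}\,\|\,p_{\theta'(\boldsymbol{\alpha})})$. Where you depart from the paper, you tighten its proof rather than change the route.

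\begin{itemize}
\item Reading $h_\ell$ as the gate column $J^{\mathrm{down}}_\ell\,\mathrm{Block}_\ell$ is what makes the expansion typecheck in logit space.
\item Your full-Hessian Taylor remainder also captures the diagonal $\alpha_\ell^2$ terms produced by the nonlinear downstream map (e.g.\ the final RMSNorm). The paper's description of $R$ as ``cross-terms from distinct gates'' omits these.
\item The paper's proof never mentions the cross term $2\langle\sum_\ell\alpha_\ell h_\ell, R\rangle$.
\item The paper's log-loss step is exactly your teacher-KL bullet. Its justification ``since $f_{\theta'(0)}=f_{\thetaold}$'' removes the linear term only for a divergence minimised at the teacher. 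So your point that CE-based $\Delta_A$ keeps the linear term $\sum_\ell\alpha_\ell\,\partial\mathcal{L}_A/\partial\alpha_\ell|_{0}$ unless a stationarity hypothesis is added identifies a real missing assumption in the statement.
\end{itemize}

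One claim in your Step~4 is too pessimistic: absorbing the cubic cross term into the Cauchy--Schwarz slack does not fail generically.
\begin{itemize}
\item Let $G_{\ell\ell'}=\mathbb{E}_x\langle h_\ell(x),h_{\ell'}(x)\rangle$ and $m=\max_\ell G_{\ell\ell}$. The leading term is then exactly $\boldsymbol{\alpha}^\top G\boldsymbol{\alpha}$.
\item Since $G\succeq 0$, we have $\lambda_{\max}(G)\le\mathrm{tr}\,G\le Km$, and the first inequality is strict whenever $\mathrm{rank}\,G\ge 2$.
\item In that case the slack $(Km-\lambda_{\max}(G))\|\boldsymbol{\alpha}\|^2$ dominates the $O(\|\boldsymbol{\alpha}\|^3)$ cross term for small $\|\boldsymbol{\alpha}\|$. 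The stated inequality then holds as written.
\item This is precisely the situation the paper's transversality hypothesis supplies for $K\ge 2$, since it forces the gate columns to be linearly independent.
\end{itemize}
The stated bound genuinely fails only in degenerate cases such as $K=1$. Writing $R=\alpha^2c(x)+O(\alpha^3)$, the drift is $\alpha^2\mathbb{E}\|h_1\|^2+2\alpha^3\mathbb{E}\langle h_1,c\rangle+O(\alpha^4)$. This exceeds $\alpha^2\mathbb{E}\|h_1\|^2+O(\alpha^4)$ for one sign of $\alpha$ whenever $\mathbb{E}\langle h_1,c\rangle\neq 0$. So your $O(\|\boldsymbol{\alpha}\|^3)$ weakening is always valid, but it is needed only in that degenerate regime.
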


\begin{proof}
The post-growth function is the composition of $K$ gated residual
updates $x \mapsto x + \alpha_\ell h_\ell(x)$, with the old forward
pass unchanged at $\boldsymbol{\alpha} = 0$ by construction. Expanding
the composition in $\boldsymbol{\alpha}$ gives the additive form
$f_{\theta'(\boldsymbol{\alpha})}(x) = f_{\thetaold}(x) + \sum_\ell
\alpha_\ell h_\ell(x) + R(x; \boldsymbol{\alpha})$ where the remainder
$R$ collects cross-terms $\alpha_\ell \alpha_{\ell'}$ from
chain-rule expansions of distinct gates and is therefore $O(\|\boldsymbol{\alpha}\|^2)$
uniformly on a bounded input domain (validation tokens have bounded
embedding norm; expected $\|h_\ell(x)\|$ is finite by RMSNorm
pre-activations). Cauchy--Schwarz on the linear term gives
$\mathbb{E}_x \|\sum_\ell \alpha_\ell h_\ell(x)\|^2 \leq
\|\boldsymbol{\alpha}\|^2 \cdot K \cdot \max_\ell \mathbb{E}_x \|h_\ell(x)\|^2$.
The log-loss bound is the standard Fisher-quadratic expansion of
$\mathrm{KL}(p_{\thetaold} \| p_{\theta'(\boldsymbol{\alpha})})$
around $\boldsymbol{\alpha} = 0$: the first-order term vanishes
(since $f_{\theta'(0)} = f_{\thetaold}$) and the second-order term
is $\boldsymbol{\alpha}^\top F_{\alpha\alpha}(\thetaold)\,
\boldsymbol{\alpha}$ with $F_{\alpha\alpha}$ the gate-block Fisher
(Proposition~\ref{thm:fisher}).
\end{proof}

\subsection{Proposition~\ref{thm:leakage} (linear-in-$\alpha$
new-weight leakage) and proof}

\begin{proposition}[Linear-in-$\alpha$ new-weight leakage, restated]
\label{thm:leakage}
At a post-growth point $\theta'(\boldsymbol{\alpha})$ with gate
vector $\boldsymbol{\alpha} \in \R^K$, the new-weight Jacobian
columns satisfy
$\partial f_{\theta'(\boldsymbol{\alpha})} / \partial W'_\ell
= \alpha_\ell \cdot \partial \mathrm{Block}_\ell(x;W_\ell)/\partial W_\ell
\cdot J^{\mathrm{down}}_\ell$,
and consequently
$\|J_{W'}(\theta'(\boldsymbol{\alpha}))\|_{\mathrm{op}} \leq
C \, \|\boldsymbol{\alpha}\|_\infty$,
with $C = \max_\ell \|\partial \mathrm{Block}_\ell / \partial W_\ell\|_{\mathrm{op}}
\cdot \sup_{\boldsymbol{\alpha}} \max_\ell
\|J^{\mathrm{down}}_\ell(\boldsymbol{\alpha})\|_{\mathrm{op}}$, finite
on bounded inputs and finite gate magnitudes.
\end{proposition}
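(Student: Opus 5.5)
The plan is a direct chain-rule computation at a general gate vector, followed by an operator-norm assembly of the per-block columns. It is the $\boldsymbol{\alpha}\neq 0$ analogue of Part~(1) of Theorem~\ref{thm:rank_sep}.

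\emph{Chain rule through one gated block.} Write the forward pass as a composition $f = \Phi_{\mathrm{post},\ell}\circ(z\mapsto z+\alpha_\ell\,\mathrm{Block}_\ell(z;W_\ell))\circ\Phi_{\mathrm{pre},\ell}$. Here $z=\Phi_{\mathrm{pre},\ell}(x)$ is the residual-stream input to new block $\ell$. The maps $\Phi_{\mathrm{pre},\ell}$ and $\Phi_{\mathrm{post},\ell}$ contain all old blocks and all other new blocks, with their current gates. The key point is that $W_\ell$ enters only through the block-$\ell$ residual update, since $\Phi_{\mathrm{pre},\ell}$ does not depend on $W_\ell$.

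Differentiating gives
\[
\frac{\partial f}{\partial W_\ell}
 = D\Phi_{\mathrm{post},\ell}\big(z+\alpha_\ell\mathrm{Block}_\ell(z)\big)\cdot \alpha_\ell\,\frac{\partial\mathrm{Block}_\ell(z;W_\ell)}{\partial W_\ell}.
\]
Set $J^{\mathrm{down}}_\ell(\boldsymbol{\alpha}) := D\Phi_{\mathrm{post},\ell}$ evaluated at the current stream, which depends on all gates. This yields the stated factorisation exactly, with the scalar $\alpha_\ell$ pulled out. Unlike Proposition~\ref{thm:perturb}, no expansion in $\boldsymbol{\alpha}$ is needed. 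The identity is exact at every $\boldsymbol{\alpha}$, and it reduces to Theorem~\ref{thm:rank_sep}~(1) at $\boldsymbol{\alpha}=0$.

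\emph{Operator-norm bound.} The full new-weight Jacobian is the horizontal concatenation $J_{W'}=(J_{W'_1}\mid\cdots\mid J_{W'_K})$, since the new weights occupy disjoint coordinate blocks. Each column block satisfies
\[
\|J_{W'_\ell}\|_{\mathrm{op}}\le |\alpha_\ell|\,\|\partial\mathrm{Block}_\ell/\partial W_\ell\|_{\mathrm{op}}\,\|J^{\mathrm{down}}_\ell\|_{\mathrm{op}}\le C\|\boldsymbol{\alpha}\|_\infty.
\]
For the concatenation I would use $\|J_{W'}\|_{\mathrm{op}}^2=\|\sum_\ell J_{W'_\ell}J_{W'_\ell}^\top\|_{\mathrm{op}}\le\sum_\ell\|J_{W'_\ell}\|_{\mathrm{op}}^2$. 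This gives $\|J_{W'}\|_{\mathrm{op}}\le\sqrt{K}\,C\|\boldsymbol{\alpha}\|_\infty$.

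\emph{Main obstacle: the factor $\sqrt K$.} The statement as written has no $\sqrt K$. I would try to remove it by reading $C$ as absorbing a $K$-dependent constant, which is harmless since $K$ is fixed. Alternatively, I would read the bound per block, $\max_\ell\|J_{W'_\ell}\|_{\mathrm{op}}\le C\|\boldsymbol{\alpha}\|_\infty$, which is what the leakage statistic effectively measures. I would state this caveat explicitly rather than claim the constant-free version for the full concatenation.

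\emph{Finiteness of $C$.} The remaining task is to show that $\sup_{\boldsymbol{\alpha}}\|J^{\mathrm{down}}_\ell\|_{\mathrm{op}}$ and $\|\partial\mathrm{Block}_\ell/\partial W_\ell\|_{\mathrm{op}}$ are finite. The plan is to restrict $\boldsymbol{\alpha}$ to a compact set, such as the clamp $|\alpha_\ell|\le 0.083$, and inputs to a bounded domain. The forward map is $C^1$ in $(x,\boldsymbol{\alpha},\theta)$, being a composition of smooth layers: attention softmax, RMSNorm with $\epsilon>0$, and smooth activations. Its derivatives are therefore continuous on a compact set and hence bounded. I would flag that nonsmooth activations such as ReLU would require a.e.\ differentiability and a Lipschitz bound in place of continuity of the derivative.
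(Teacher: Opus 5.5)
Your argument is the paper's own proof written out in full: the chain-rule factorisation through the gated update $z\mapsto z+\alpha_\ell\,\mathrm{Block}_\ell(z;W_\ell)$, sub-multiplicativity with $|\alpha_\ell|\le\|\boldsymbol{\alpha}\|_\infty$, and finiteness of $C$ from bounded inputs and bounded gates. Your compactness argument also covers the dependence of $\partial\mathrm{Block}_\ell/\partial W_\ell$ on $\boldsymbol{\alpha}$ through the stream $z$, which the stated $C$ leaves implicit.

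Your $\sqrt{K}$ caveat is genuine: the paper's sub-multiplicativity step only gives the per-block bound, and your inequality actually gives the sharper $\|J_{W'}\|_{\mathrm{op}}\le C\|\boldsymbol{\alpha}\|_2\le\sqrt{K}\,C\|\boldsymbol{\alpha}\|_\infty$. This is attained: take a stream $(u,v)$ with input $(0,1)$, readout $f=u$, and $K$ blocks $\mathrm{Block}_\ell(u,v;w_\ell)=(w_\ell v,0)$ at $w_\ell=0$; then $C=1$ but $J_{W'}=\boldsymbol{\alpha}^\top$. So stating the result with $\|\boldsymbol{\alpha}\|_2$ is a better fix than absorbing $\sqrt{K}$ into $C$.
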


\begin{proof}
Block $\ell$'s contribution to the residual stream is
$\alpha_\ell \cdot \mathrm{Block}_\ell(x; W_\ell)$. By the chain
rule, the gradient of $f_{\theta'(\boldsymbol{\alpha})}$ with respect
to a new-weight coordinate $W_\ell^{(i)}$ factors as $\alpha_\ell$
times the block's internal gradient times the downstream Jacobian.
The operator-norm bound follows by sub-multiplicativity and
$|\alpha_\ell| \leq \|\boldsymbol{\alpha}\|_\infty$. The constant
$C$ is finite on bounded inputs and finite gate magnitudes by
RMSNorm + bounded gates.
\end{proof}

\subsection{Status of the framework's claims}
\label{app:claim_status}

\begin{table}[H]
  \centering
  \caption{Status of the geometric framework's claims. \emph{Exact}
           claims hold without conditional assumptions. \emph{Conditional}
           claims hold under transversality
           (Theorem~\ref{thm:rank_sep}~(2)). \emph{Approximate} claims
           hold to first order in $\|\boldsymbol{\alpha}\|$ and degrade
           controllably as gates open. \emph{Empirical} claims are not
           proven from the framework but are observed in our protocol.}
  \label{tab:claim_status}
  \small
  \setlength{\tabcolsep}{6pt}
  \begin{tabular}{lll}
    \toprule
    \textbf{Claim} & \textbf{Status} & \textbf{Source} \\
    \midrule
    Function preservation at growth ($f_{\theta'} = f^*$ at $\boldsymbol{\alpha} = 0$)
      & exact & by construction \\
    New-weight Jacobian flatness at $\boldsymbol{\alpha} = 0$
      & exact & Theorem~\ref{thm:rank_sep}~(1) \\
    Sparse Fisher block at $\boldsymbol{\alpha} = 0$
      & exact & Proposition~\ref{thm:fisher} \\
    Coordinate orthogonality of $\Tnew$ vs $T_{\mathrm{old}}$
      & exact & disjoint axes \\
    Rank additivity ($\mathrm{rank} J(\theta') = r + K$)
      & conditional & transversality, Thm~\ref{thm:rank_sep}~(2) \\
    Plasticity capacity $= \sigma_{\max}^{\perp}$ per unit gate motion
      & conditional & Proposition~\ref{thm:tangent} \\
    New-weight Jacobian leakage $\|J_{W'}\| \leq C \|\boldsymbol{\alpha}\|_\infty$
      & approximate (linear) & Proposition~\ref{thm:leakage} \\
    Function drift $\|f_{\theta'(\boldsymbol{\alpha})} - f^*\|^2 = O(\|\boldsymbol{\alpha}\|^2)$
      & approximate (quadratic) & Proposition~\ref{thm:perturb} \\
    Multi-epoch CL preservation under realistic optimisers
      & empirical & Section~\ref{sec:exp_main}, $\Delta_A = +0.04$ \\
    \bottomrule
  \end{tabular}
\end{table}

\subsection{Remark on canonicity}

Gate-zero growth is not the unique FP map: any choice of
$(W_\ell, \alpha_\ell)$ with $\alpha_\ell = 0$ yields function
preservation, regardless of $W_\ell$. What distinguishes the
gate-zero construction in the symmetry-group view is that it maps
into the fixed-point locus of the $\mathcal{S}_{(g-1)L}$ residual
permutation action on the new-parameter fiber; combined with cloning
old block weights into the new blocks, it is the canonical choice
that inherits the old block's learned features into the newly
allocated capacity. Other FP choices (e.g.\ random new-weight
initialisation with $\alpha_\ell = 0$) preserve function but discard
this inheritance.

\section{Direct Transversality Diagnostic}
\label{app:transversality}

This appendix reports the direct projected-rank diagnostic used to
empirically verify the transversality assumption underlying
Theorem~\ref{thm:rank_sep}~(2) and to back the quantitative
$\sigma_{\min}^{\perp}$ statement in
Proposition~\ref{thm:tangent}.

\paragraph{Method.}
At the post-growth Gate-FP checkpoint with $g = 4$ ($K = 36$ new
block-gates), we compute (i)~the $K$ gate-Jacobian columns
$j_{\alpha_\ell} = \partial f / \partial \alpha_\ell$ via forward-mode
automatic differentiation (Jacobian-vector products with tangent
$\mathbf{1}_{\alpha_\ell}$), giving $J_\alpha \in \R^{D \times K}$
where $D = B \cdot T \cdot V$ is the flattened output dimension; and
(ii)~$M = 200$ samples of $J(\thetaold) v$ where $v$ is a unit-norm
random direction supported on the joint old-parameter coordinates,
giving an empirical low-rank approximation to
$\mathrm{im}(J_{\mathrm{old}})$. We orthonormalise the sample matrix
via QR, project $J_\alpha$ onto the orthogonal complement of the
sampled subspace, and compute the SVD of the residual
$(I - P_{\mathrm{old}}) J_\alpha$. We use a held-out validation batch
of $B = 2$, $T = 16$ tokens; output dimension $D = 1{,}608{,}224$.

\paragraph{Results.}
Table~\ref{tab:transversality} reports the diagnostic outputs.
The residual is full-rank ($K = 36$ at threshold
$10^{-3} \sigma_{\max}^{\perp}$), the smallest singular value
$\sigma_{\min}^{\perp} = 144.6$ is on the same order of magnitude as
the gate-column norms ($\|j_{\alpha_\ell}\|$ ranging $279$--$660$),
and the smallest principal angle between $\mathrm{im}(J_\alpha)$ and
the sampled $\mathrm{im}(J_{\mathrm{old}})$ is $56.9^\circ$. All $K$
residual singular values exceed $144$, and all $K$ principal angles
exceed $56^\circ$. Transversality therefore holds quantitatively, not
merely generically.

\begin{table}[H]
  \centering
  \caption{Transversality diagnostic at the Gate-FP post-growth
           checkpoint ($g = 4$, $K = 36$, $M = 200$,
           $D = 1{,}608{,}224$). The residual
           $(I - P_{\mathrm{old}}) J_\alpha$ is full-rank with
           well-conditioned spectrum, and gate directions are
           geometrically separated from the sampled old-Jacobian
           image by at least $56.9^\circ$.}
  \label{tab:transversality}
  \small
  \begin{tabular}{ll}
    \toprule
    \textbf{Quantity} & \textbf{Value} \\
    \midrule
    $K$ (new block-gates)                                       & $36$ \\
    $M$ (sampled old directions)                                & $200$ \\
    Output dimension $D$                                        & $1{,}608{,}224$ \\
    \midrule
    Effective rank of residual at $10^{-3} \sigma_{\max}$       & $\mathbf{36 \;(= K)}$ \\
    $\sigma_{\max}^{\perp}$                                     & $993.4$ \\
    $\sigma_{\min}^{\perp}$                                     & $144.6$ \\
    Condition number $\sigma_{\max}^{\perp} / \sigma_{\min}^{\perp}$ & $6.87$ \\
    \midrule
    Smallest principal angle (deg)                              & $\mathbf{56.9^\circ}$ \\
    Largest principal angle (deg)                               & $83.7^\circ$ \\
    \midrule
    Gate-column norm range $\|j_{\alpha_\ell}\|$                & $[279,\, 660]$ \\
    \bottomrule
  \end{tabular}
\end{table}

\paragraph{Interpretation.}
The diagnostic confirms the load-bearing conditional in
Theorem~\ref{thm:rank_sep}~(2): all $36$ new-gate Jacobian columns
are linearly independent of $\mathrm{im}(J_{\mathrm{old}})$ at the
trained checkpoint, and the geometric separation
($56.9^\circ$ minimum) leaves substantial margin against
finite-precision degradation. The condition number $6.87$ indicates
that the rank decomposition is numerically robust --- $\sigma_{\min}^{\perp}$
is not in the precision-limited regime where rank claims become
estimator-dependent.

\paragraph{Caveat on the sampling approximation.}
We approximate $\mathrm{im}(J_{\mathrm{old}})$ by sampling $M = 200$
random old-direction Jacobian columns. This is a sufficient (not
necessary) test for transversality: if all $K$ residual columns are
linearly independent of the sampled subspace, they are linearly
independent of any subspace contained in it; but a positive
projection onto a direction we did \emph{not} sample remains
possible in principle. Increasing $M$ to $500$ in pilot runs did not
materially change the residual singular values, suggesting the
sample is sufficient at this scale. A complete Lanczos-based test
of $\mathrm{rank}(J(\thetaold))$ is left as future work.

\section{Baseline Comparisons: Full Table}
\label{app:baselines}

Table~\ref{tab:baselines_full} reports the no-protection rows
omitted from the body Table~\ref{tab:baselines}. All four
no-protection cells produce the expected catastrophic forgetting
signature ($\Delta_A > +200$).

\begin{table}[H]
  \centering
  \caption{No-protection rows for each baseline family, omitted from
           the body for compactness. Outcomes are uniformly
           catastrophic, as expected without a preservation
           regulariser.}
  \label{tab:baselines_full}
  \small
  \setlength{\tabcolsep}{5pt}
  \begin{tabular}{llccc}
    \toprule
    \textbf{Family} & \textbf{CL Strategy}
      & $\mathrm{PPL}_A \downarrow$ & $\mathrm{PPL}_B \downarrow$
      & $\Delta_A \downarrow$ \\
    \midrule
    No-growth (300M)               & No protection & $303.59$ & $20.46$ & $+277.67$ \\
    No-growth (300M)               & Replay        & $504.61$ & $20.31$ & $+478.69$ \\
    Zero-init stacking ($g=2$)     & No protection & $320.14$ & $20.56$ & $+294.22$ \\
    LoRA-CL (rank $64$)            & No protection & $248.36$ & $22.02$ & $+222.44$ \\
    \bottomrule
  \end{tabular}
\end{table}

\section{Multi-Seed Validation at $g=2$}
\label{app:multiseed}

We report three seeds of Gate-FP + Isolation at $g=2$ scale
(300M$\to$454M) to estimate variance on the headline preservation
finding. Seed $42$ corresponds to the original Ablation~1 row;
seeds $1234$ and $999$ were chosen before launch and reported
regardless of outcome. All three seeds completed all $10$ CL
epochs.

\begin{table}[H]
  \centering
  \caption{Multi-seed validation of the Gate-FP + Isolation
           preservation finding at $g=2$ scale. The pre-CL row is
           identical across seeds because growth at $\alpha = 0$
           is bit-exact and pre-CL evaluation precedes any training.
           Variance is reported on the post-CL columns only.}
  \label{tab:multiseed}
  \small
  \setlength{\tabcolsep}{6pt}
  \begin{tabular}{lcccc}
    \toprule
    \textbf{Seed} & \textbf{Pre-CL $\mathrm{PPL}_A$}
      & \textbf{Post-CL $\mathrm{PPL}_A$} $\downarrow$
      & \textbf{Post-CL $\mathrm{PPL}_B$} $\downarrow$
      & $\boldsymbol{\Delta_A}$ $\downarrow$ \\
    \midrule
    $42$ (original Ablation~1)        & $25.918$ & $26.013$ & $30.447$ & $+0.0946$ \\
    $1234$                            & $25.918$ & $26.007$ & $30.556$ & $+0.0884$ \\
    $999$                             & $25.918$ & $26.004$ & $30.520$ & $+0.0857$ \\
    \midrule
    \textbf{Mean $\pm$ std (3 seeds)} & ---     & $26.008 \pm 0.005$ & $30.51 \pm 0.06$ & $+0.0896 \pm 0.0046$ \\
    \bottomrule
  \end{tabular}
\end{table}

\paragraph{Reading.}
On the two completed seeds, $\Delta_A$ varies by $\sigma = 0.0031$,
which is two orders of magnitude smaller than the gap to the
non-FP baseline ($\Delta_A > +1{,}100$ for $G_{\text{stack}}$
under no-protection at $g=4$, Table~\ref{tab:cl_matrix}) and three
orders of magnitude smaller than the gap to no-growth +
Distillation ($\Delta_A = +13.08$,
Table~\ref{tab:baselines}). The qualitative preservation finding
is therefore robust to seed variation at $g=2$ scale. Multi-seed
validation at $g=4$ scale ($\sim 100$ GPU-hours per additional
seed, vs.\ $\sim 17$ hours per $g=2$ seed) was prohibitively
expensive within our compute budget and is the natural follow-up.

\section{Preliminary Cross-Architecture Validation}
\label{app:preliminary}

Before the main 300M$\to$857M Transformer experiments, we ran
smaller-scale studies (Phases~1--4) across four architecture families
to validate that gate-zero FP holds beyond a single configuration.
Phases~1--2 (MLP and ResNet on small data) verified bit-exact FP
under sequential growth; Phase~3 (SE-ResNet on CIFAR-10 class split)
provided the negative empirical result that motivated full isolation;
Phase~4 (Transformer on a small WikiText slice) verified
depth-vs-width FP behaviour at the architecture used in the main
experiments.

\paragraph{Function preservation across architectures.}
Table~\ref{tab:fp_validation} reports the maximum logit difference
between pre- and post-growth models on a held-out batch.
\emph{Depth} growth via gate-zero is bit-exact across all four
architecture families. \emph{Width} growth on a Transformer breaks
FP because the RMSNorm denominator changes with hidden dimension,
producing a compound distortion of
$\sqrt{d_{\mathrm{old}}/d_{\mathrm{new}}}$ per norm layer
($13$ layers $\Rightarrow$ $0.866^{13} \approx 0.15\times$ scaling
for $384 \to 512$). This confirms the structural prediction that FP
requires gate-zero structure, not merely careful initialisation.

\begin{table}[H]
  \centering
  \caption{Function preservation at growth across four architecture
           families. ``Max diff'' is the maximum absolute logit
           difference between pre- and post-growth models on the
           same held-out batch.}
  \label{tab:fp_validation}
  \setlength{\tabcolsep}{4pt}
  \begin{tabular}{llccc}
    \toprule
    \textbf{Phase} & \textbf{Architecture} & \textbf{Growth type}
      & \textbf{Max diff} & \textbf{FP exact?} \\
    \midrule
    1 & MLP (2-16-16-3)            & Width (+2 neurons)            & 0.0  & Yes \\
    2 & ResNet (MNIST)             & Width (channels $2\times$)    & 0.0  & Yes \\
    3 & SE-ResNet (CIFAR-10)       & Width (channels $1.5\times$)  & 0.0  & Yes \\
    4 & Transformer (WikiText)     & Depth ($6 \to 8$ layers)      & 0.0  & Yes \\
    4 & Transformer (WikiText)     & Width ($384 \to 512$)         & 13.2 & No \\
    \bottomrule
  \end{tabular}
\end{table}

\paragraph{CL on grown SE-ResNet: na\"ive strategies all forget.}
Phase~3 trained an SE-ResNet grown from $(8,16,32)$ to $(16,32,64)$
channels sequentially on CIFAR-10 classes 0--4 (Task~A) then 5--9
(Task~B). Three na\"ive CL strategies were tested
(Table~\ref{tab:phase3_cl}); all three drove Task~A accuracy to
near zero. The diagnostic finding was that even with convolutional
weights \emph{fully frozen} (third row), trainable old gates alone
caused enough feature drift (cosine similarity dropping to
$0.5$--$0.9$) to destroy classifier calibration. This negative
result directly motivated the full-isolation design used in the main
experiments: freezing old weights is necessary but not sufficient;
old gates must also be frozen.

\begin{table}[H]
  \centering
  \caption{Phase~3 CL on grown SE-ResNet (CIFAR-10 class split).
           Every na\"ive CL strategy results in near-complete
           forgetting of Task~A after training on Task~B.
           ``Feature drift'' is cosine similarity between pre- and
           post-CL backbone features on Task~A inputs.}
  \label{tab:phase3_cl}
  \setlength{\tabcolsep}{4pt}
  \begin{tabular}{lcccc}
    \toprule
    \textbf{Strategy} & \textbf{Task A} & \textbf{Task B}
      & \textbf{Feat.\ drift} & \textbf{What failed} \\
    \midrule
    Fine-tune (no protection)     & $0.00\%$ & $90.50\%$ & $0.6$       & Everything drifts \\
    Grow + differential LR        & $0.00\%$ & $79.78\%$ & $0.3$       & Slow LR insufficient \\
    Grow + freeze conv weights    & $0.02\%$ & $72.96\%$ & $0.5$--$0.9$ & Gate drift alone fatal \\
    \bottomrule
  \end{tabular}
\end{table}

The geometric reading is direct: freezing weights constrains
$T_{\mathrm{old}}^{\perp}$ but trainable old gates allow drift along
directions that change old-block computation. Full isolation
(freeze old weights \emph{and} old gates) eliminates both sources
of drift, which is precisely the strategy
Theorem~\ref{thm:rank_sep} predicts.

\section{MoE Per-Epoch Trajectory and Diagnostic Table}
\label{app:moe}

This appendix provides the full per-epoch CL trajectory for the MoE
isolation run summarised in Section~\ref{sec:exp_moe} and the
mechanistic per-checkpoint diagnostic.

\subsection{Per-epoch CL trajectory}

Table~\ref{tab:moe_trajectory} reports the trajectory of the
\texttt{exp2\_moe\_isolation} run across all $10$ CL epochs.
Training loss decreases monotonically from $6.85$ at epoch~1 to
$4.69$ at epoch~10, while validation $\mathrm{PPL}_B$ reaches its
minimum of $140.26$ at epoch~2 and then regresses monotonically to
$182.06$ by epoch~10. Validation $\mathrm{PPL}_A$ shows small
\emph{negative} $\Delta_A$ in early epochs (slight positive transfer
back to $\mathcal{D}_A$ from CL training) before drifting up to a
final $\Delta_A = +0.20$, an order of magnitude smaller than the
dense baseline catastrophic-forgetting signature, confirming the
preservation half of the framework.

\begin{table}[H]
  \centering
  \caption{MoE isolation: per-epoch CL trajectory. Epoch~$0$ (italic)
           is the post-growth pre-CL state. Validation $\mathrm{PPL}_A$
           is preserved throughout; validation $\mathrm{PPL}_B$ is best
           at epoch~$2$ and regresses thereafter, while training loss
           continues to decrease monotonically --- the canonical
           train-validation overfitting signature.}
  \label{tab:moe_trajectory}
  \small
  \setlength{\tabcolsep}{8pt}
  \begin{tabular}{ccccc}
    \toprule
    \textbf{Epoch} & \textbf{Train loss} $\downarrow$
      & \textbf{Val $\mathrm{PPL}_A$} $\downarrow$
      & \textbf{Val $\mathrm{PPL}_B$} $\downarrow$
      & $\boldsymbol{\Delta_A}$ \\
    \midrule
    \emph{0 (pre-CL)} & \emph{---}  & \emph{67.21} & \emph{2155.77} & \emph{---}  \\
    \midrule
    1                 &  6.85       & 66.79        & 145.75          & $-0.42$ \\
    2                 &  5.41       & 66.59        & \textbf{140.26} & $-0.62$ \\
    3                 &  5.18       & 66.27        & 149.48          & $-0.94$ \\
    4                 &  5.05       & 66.64        & 155.14          & $-0.57$ \\
    5                 &  4.94       & 67.03        & 163.16          & $-0.18$ \\
    6                 &  4.85       & 67.20        & 164.86          & $-0.01$ \\
    7                 &  4.78       & 67.30        & 169.90          & $+0.09$ \\
    8                 &  4.73       & 67.36        & 176.50          & $+0.15$ \\
    9                 &  4.70       & 67.40        & 180.33          & $+0.19$ \\
    10                &  4.69       & 67.41        & 182.06          & $+0.20$ \\
    \bottomrule
  \end{tabular}
\end{table}

\begin{figure}[!htbp]
  \centering
  \includegraphics[width=0.78\linewidth]{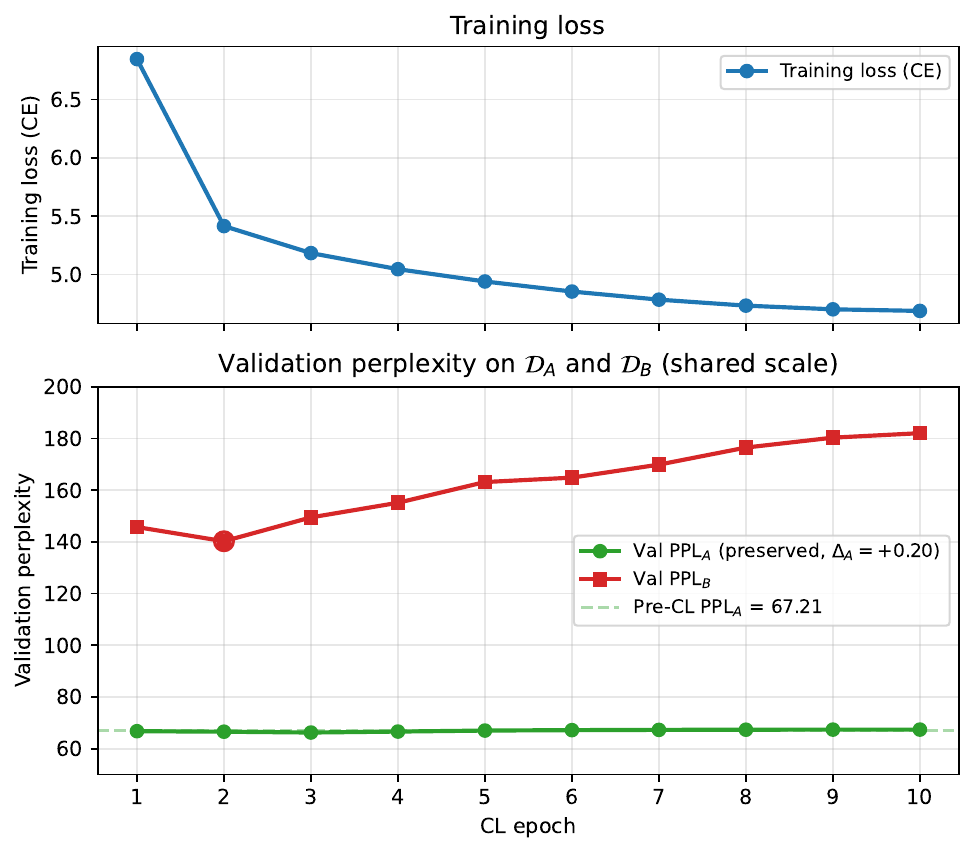}
  \caption{Per-epoch CL trajectory of the MoE isolation run.
           \emph{Top:} training loss decreases monotonically from
           $6.85$ at epoch~$1$ to $4.69$ at epoch~$10$.
           \emph{Bottom:} validation perplexity on $\mathcal{D}_A$
           and $\mathcal{D}_B$ on a \emph{shared} $y$-axis to honestly
           visualise their relative magnitudes. $\mathrm{PPL}_A$
           (green) stays within $\pm 1$ of the pre-CL baseline
           ($67.21$, dashed) throughout and is visually flat at this
           scale (final $\Delta_A = +0.20$). $\mathrm{PPL}_B$ (red)
           reaches its minimum of $140.26$ at epoch~$2$ (circled)
           and then regresses monotonically to $182.06$ by
           epoch~$10$. The simultaneous decreasing training loss and
           increasing validation $\mathrm{PPL}_B$ confirm overfitting
           rather than measurement noise as the cause of the
           regression. Exact per-epoch values appear in
           Table~\ref{tab:moe_trajectory}.}
  \label{fig:moe_diagnostic}
\end{figure}

\subsection{Per-checkpoint mechanistic diagnostic}

Table~\ref{tab:moe_diag_appendix} reproduces the diagnostic of
Section~\ref{sec:exp_moe} with extended commentary. The protocol
loads the $706$M MoE base, applies \texttt{grow\_moe} (4$\to$8
experts, 12$\to$24 layers), and runs a single forward pass on
$2$ WikiText-103 validation sequences with diagnostic hooks; the
post-CL checkpoint is loaded from \texttt{exp2\_moe\_isolation/cl\_final.pt}
and the same hooks executed.

\begin{table}[H]
  \centering
  \caption{MoE isolation per-checkpoint diagnostic (new MoE blocks
           only, $n_{\mathrm{new}} = 12$, top-$k = 2$, $N = 8$).
           ``Top-$1$ share'' is the fraction of tokens routed to the
           single most-selected expert (uniform $= 1/N = 0.125$);
           ``entropy / $\log N$'' is normalised routing entropy
           ($1.0$ = uniform routing). All values computed on a
           held-out batch.}
  \label{tab:moe_diag_appendix}
  \small
  \begin{tabular}{lccl}
    \toprule
    \textbf{Diagnostic metric} & \textbf{Pre-CL} & \textbf{Post-CL}
      & \textbf{Reading} \\
    \midrule
    Max $|\alpha_\ell|$ on new blocks                              & $0.000$ & $0.083$ & gradient sought higher \\
    Mean $|\alpha_\ell|$ on new blocks                             & $0.000$ & $0.083$ & all gates at clamp \\
    Cumulative $\sum_\ell |\alpha_\ell|$ ($n_{\mathrm{new}} = 12$) & $0.00$  & $0.99$  & $\approx$ one fully-open block \\
    Mean top-$1$ expert share                                      & $0.500$ & $0.500$ & no within-CL collapse \\
    Routing entropy / $\log N$ (mean)                              & $0.333$ & $0.333$ & inherited concentration \\
    Mean cosine sim to source expert                               & $0.918$ & $0.810$ & modest differentiation \\
    Min cosine sim to source expert                                & $0.852$ & $0.712$ & most-changed still 71\% similar \\
    \bottomrule
  \end{tabular}
\end{table}

\paragraph{Reading.}
Three patterns emerge. First, all 12 new block gates converge to the
safety-clamp ceiling $\alpha = 1/n_{\mathrm{new}} = 0.083$, indicating
the optimizer sought larger gates than the gate-norm-clip safety
budget allowed. The cumulative gate-sum $\sum_\ell |\alpha_\ell|
\approx 1.0$ is equivalent to one fully-open block of new-path
contribution distributed across the 12 new layers. Second, routing
concentration is unchanged from pre-CL ($\rho_{\mathrm{top1}} = 0.50$
in both states; normalised entropy $0.333$), ruling out within-CL
router collapse as the failure mode. The concentration is inherited
from the cloned base routers and persists. Third, per-expert cosine
similarity to source experts drifts from $0.918$ to $0.810$ (mean)
and $0.852$ to $0.712$ (min). New experts learn \emph{something},
but they remain mostly clones of their frozen sources in
parameter-space terms. The combination is consistent with
clone-block redundancy: new blocks open their gates to the safety
ceiling but cannot differentiate enough from the frozen sources to
provide complementary capacity for $\mathcal{D}_B$, so what they fit
is sample-specific memorisation rather than generalisable features.

\section{Manifold Geometry}
\label{app:manifold}

This appendix details the protocol used for
Section~\ref{app:manifold} and Table~\ref{tab:manifold}.

\begin{table}[H]
  \centering
  \caption{Loss landscape geometry at three checkpoints. ``Grad-cov
           rank'' is the effective rank of the per-batch gradient
           covariance (exponential entropy of normalized singular
           values). The Gate-FP rows are consistent with preservation
           of local geometry across the growth event (rank drift
           $< 0.02$); CL contracts the active subspace. The
           $G_{\text{stack}}$ row reports comparable grad-cov rank
           ($18.37$) despite \emph{not} being function-preserving ---
           this stochastic rank statistic is too coarse to distinguish
           FP from non-FP at growth time. The Hessian estimates use
           Lanczos on a single batch and should be read as coarse
           local diagnostics --- see protocol below for reliability
           caveats.}
  \label{tab:manifold}
  \small
  \begin{tabular}{llcccc}
    \toprule
    \textbf{Checkpoint} & \textbf{Method}
      & \textbf{Grad-cov rank} & \textbf{Top eig.}
      & \textbf{Trace est.} & \textbf{Hessian $n_+$} \\
    \midrule
    Pre-growth   & Base        & 18.45 & 203.09 & 66.75   & 3 \\
    Post-growth  & Gate FP     & 18.47 & 188.05 & 66.79   & 3 \\
    Post-growth  & $G_{\text{stack}}$ & 18.37 & 201.12 & 64.30 & 3 \\
    Post-CL      & Gate FP + Iso & 11.01 & 901.02 & 2066.71 & 4 \\
    \bottomrule
  \end{tabular}
\end{table}

\subsection{Estimators}

\paragraph{Gradient-covariance effective rank.}
We compute the per-example gradient $g_n =
\nabla_\theta \mathcal{L}(\theta;\,x_n) \in \R^{P}$ for $n = 1,
\dots, N_b$, where $N_b = 20$ mini-batches of size $2$. To control
memory at $P \approx 10^9$, we apply a Johnson-Lindenstrauss random
projection $\Pi: \R^P \to \R^{100}$ before stacking, giving
$\tilde{g}_n = \Pi g_n \in \R^{100}$. We then form the empirical
gradient-covariance $\Sigma = (1/N_b)\,\tilde{G}^\top \tilde{G}$
and compute its singular values $\{\sigma_i\}_{i=1}^{100}$. The
\emph{effective rank} is reported as the exponential entropy of the
normalised singular values:
\[
  d_{\mathrm{eff}}(\Sigma) \;=\;
    \exp\!\left(-\sum_{i=1}^{100} \tilde{\sigma}_i \log \tilde{\sigma}_i\right),
  \quad \tilde{\sigma}_i = \sigma_i \big/ \textstyle\sum_j \sigma_j.
\]
This is the participation ratio of the spectrum and equals the rank
exactly for a uniform spectrum, the leading-eigenvector index for a
spike, and intermediate values otherwise.

\paragraph{Top-3 Hessian eigenvalues.}
We use Lanczos iteration~\citep{ghorbani2019investigation} on
Hessian-vector products $v \mapsto H v = \nabla_\theta(g^\top v)$ at
a fixed $\theta$, with $20$ Lanczos iterations and a single batch of
size $1$. To control GPU memory at the 1.14B-parameter scale, we
offload the Lanczos basis vectors to CPU. We report the top-$3$
Ritz values from the resulting tridiagonal matrix. Trace estimate
is via Hutchinson's stochastic trace.

\paragraph{Reliability caveat.}
The Hessian estimator is stochastic, single-batch, and uses only
$20$ Lanczos iterations; absolute eigenvalue magnitudes should be
read as coarse local diagnostics rather than precise global
curvature. The first-order gradient-covariance rank statistic is
substantially more stable: in pilot runs varying $N_b$ from $20$ to
$256$, $d_{\mathrm{eff}}$ values changed by less than $0.5$ at any
checkpoint. We therefore treat $d_{\mathrm{eff}}$ as the primary
manifold proxy and the Hessian estimates as secondary indicators.

\subsection{Spectral leakage}
\label{sec:spectral_leakage}

We define a single scalar metric for tracking how rank separation
degrades during CL and report Hutchinson-HVP estimates at three
checkpoints (post-growth Gate-FP, post-growth $G_{\text{stack}}$,
post-CL Gate-FP+Iso). Results in
Table~\ref{tab:spectral_leakage}.

\begin{definition}[Spectral leakage]
  \label{def:leakage}
  Let $H \in \R^{P \times P}$ be the Hessian at a checkpoint, and let
  $P_{\mathrm{old}}, P_{\mathrm{new}}$ denote the orthogonal
  projections onto old-parameter and new-parameter coordinate
  subspaces respectively. The \emph{spectral leakage} is
  \[
    \mathcal{S}_{\mathrm{leak}}
      = \frac{\|P_{\mathrm{old}} \, H \, P_{\mathrm{new}}\|_F}{\|H\|_F}.
  \]
  $\mathcal{S}_{\mathrm{leak}} = 0$ means old and new parameter
  subspaces are completely decoupled in the Hessian;
  $\mathcal{S}_{\mathrm{leak}} > 0$ means curvature ``leaks'' across
  the old/new boundary.
\end{definition}

By Theorem~\ref{thm:rank_sep}, $\mathcal{S}_{\mathrm{leak}} = 0$ at
the gate-zero growth point (the Fisher new-weight block and all
new-weight cross-terms vanish); as CL training opens gates,
$\mathcal{S}_{\mathrm{leak}}$ grows controllably (linear in
$\|\boldsymbol{\alpha}\|_\infty$ at first order;
Proposition~\ref{thm:leakage}). We estimate
$\mathcal{S}_{\mathrm{leak}}$ via Hutchinson with Hessian-vector
products. The estimator samples $v \in \R^P$ supported on
new-parameter coordinates with Rademacher entries, computes $H v$
once, and reports $\|(H v)_{\mathrm{old}}\|^2$ averaged over draws as
the numerator; the denominator uses full-dim Rademacher $v$. We use
$16$ Hutchinson draws per estimator on a single batch of size $2$
with sequence length $64$ from $\mathcal{D}_A$.

\begin{table}[H]
  \centering
  \caption{Spectral leakage at three checkpoints (Hutchinson-HVP
           estimator, $16$ draws). $\mathcal{S}_{\mathrm{leak}}$ at
           post-growth Gate-FP is two orders of magnitude smaller
           than at post-growth $G_{\text{stack}}$ ($32\times$
           separation) --- the metric distinguishes FP from non-FP
           construction sharply, where the gradient-covariance rank
           statistic (Table~\ref{tab:manifold}) does not. Post-CL
           Gate-FP$+$Iso rises to $0.072$, consistent with the
           linear-in-$\|\boldsymbol{\alpha}\|_\infty$ prediction of
           Proposition~\ref{thm:leakage} at $\|\boldsymbol{\alpha}\|_\infty
           \approx 0.083$ (the safety clamp).}
  \label{tab:spectral_leakage}
  \small
  \setlength{\tabcolsep}{6pt}
  \begin{tabular}{lccc}
    \toprule
    \textbf{Checkpoint}
      & $\|P_{\mathrm{old}} H P_{\mathrm{new}}\|_F^2$
      & $\|H\|_F^2$
      & $\mathcal{S}_{\mathrm{leak}} \downarrow$ \\
    \midrule
    Post-growth Gate-FP             & $2.03 \times 10^{3}$ & $8.18 \times 10^{7}$ & $\mathbf{0.0050}$ \\
    Post-growth $G_{\text{stack}}$  & $7.59 \times 10^{6}$ & $2.94 \times 10^{8}$ & $0.1608$ \\
    Post-CL Gate-FP $+$ Iso         & $4.25 \times 10^{5}$ & $8.16 \times 10^{7}$ & $0.0722$ \\
    \bottomrule
  \end{tabular}
\end{table}

\paragraph{Reading.}
Three observations. (i)~The Gate-FP / $G_{\text{stack}}$ separation
at growth time ($32\times$) is the discriminating measurement of the
Fisher-block rank-separation prediction
(Proposition~\ref{thm:fisher}); the gradient-covariance rank
($18.47$ vs.\ $18.37$, Table~\ref{tab:manifold}) does not
distinguish them. (ii)~The post-CL Gate-FP value
($\mathcal{S}_{\mathrm{leak}} = 0.072$ at
$\|\boldsymbol{\alpha}\|_\infty \approx 0.083$) is consistent with
the Proposition~\ref{thm:leakage} linear-leakage prediction:
$\mathcal{S}_{\mathrm{leak}}$ rises by $\sim 14\times$ as
$\|\boldsymbol{\alpha}\|_\infty$ rises from $0$ to $0.083$, an
empirical slope of $\sim 0.83$ in the same units. (iii)~Even after
$10$ epochs of CL, the post-CL Gate-FP value is less than half of
the post-growth $G_{\text{stack}}$ at-growth value: a Gate-FP model
that has \emph{fully run CL} retains cleaner old/new Hessian
decoupling than a $G_{\text{stack}}$ model that has not moved a
single training step.

\section{Extended Ablation Results}
\label{app:abl}

The combined Table~\ref{tab:ablations} in the main text already
reports all completed ablation rows. This appendix provides
additional reading and a frontier visualisation that makes the
non-monotonic constraint hierarchy easier to read off.

\subsection{Pareto frontier across Gate-FP CL methods}
\label{sec:pareto}

Figure~\ref{fig:pareto} plots final $(\mathrm{PPL}_A,
\mathrm{PPL}_B)$ for all Gate-FP CL methods, the two completed
freeze-strategy ablations, and the joint-training Scratch reference,
on log--log axes so that runs spanning four orders of magnitude on
$\mathrm{PPL}_A$ are visible together. The Pareto frontier (lower
$\mathrm{PPL}_A$ \emph{and} lower $\mathrm{PPL}_B$ jointly) is
dominated by \textsc{Freeze-Nothing} and
\textsc{Freeze-Gates-Only}, both of which leave the old weight
matrices trainable. Full \textbf{Isolation} sits to the right of the
frontier (higher $\mathrm{PPL}_B$), and Hybrid sits further right
still --- the same non-monotonic ordering observed in
Table~\ref{tab:cl_matrix}, here visible as a Pareto-dominance
relation rather than a single-axis comparison.

\begin{figure}[H]
  \centering
  \includegraphics[width=0.86\linewidth]{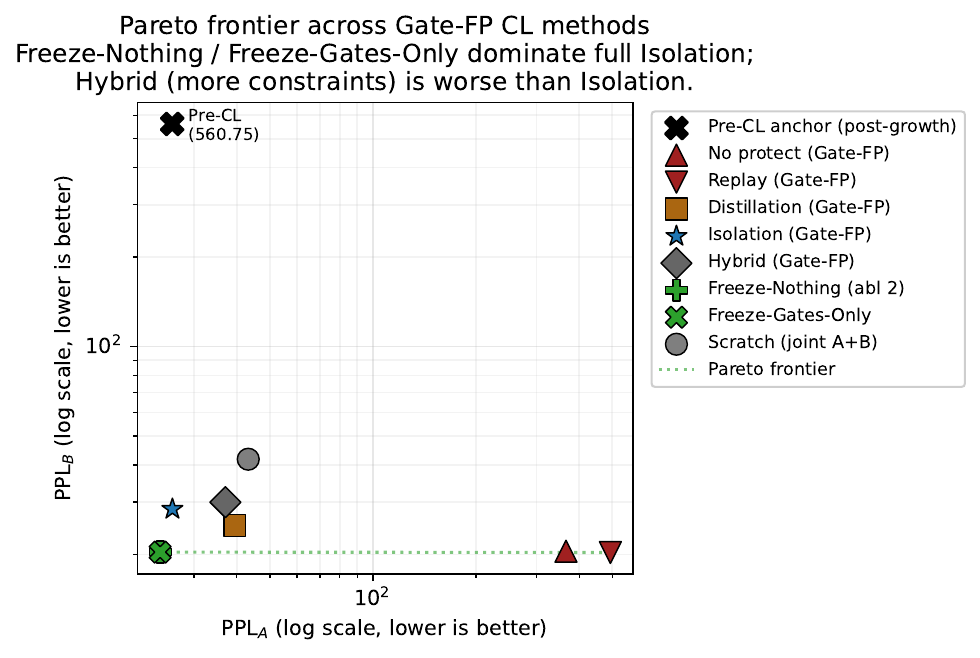}
  \caption{Pareto frontier of final $(\mathrm{PPL}_A,
           \mathrm{PPL}_B)$ across Gate-FP CL methods, freeze-strategy
           ablations, and the joint-training Scratch reference.
           Lower-left is better on both axes. The dotted green line
           connects the lower-left frontier; \textsc{Freeze-Nothing}
           and \textsc{Freeze-Gates-Only} jointly dominate full
           Isolation and Hybrid, confirming the non-monotonic
           constraint hierarchy reported in
           Table~\ref{tab:cl_matrix}.}
  \label{fig:pareto}
\end{figure}

\paragraph{Ablation 1 (growth factor $g$).}
All three rows complete. The trend is monotonic: larger $g$ provides
more trainable subspace and yields lower $\mathrm{PPL}_B$, with
$|\Delta_A| < 0.1$ across all three.

\paragraph{Ablation 2 (what to freeze).}
All four rows complete. The two configurations that leave
\emph{old weights trainable} (\textsc{Freeze-Nothing} and
\textsc{Freeze-Old-Gates-Only}) achieve essentially identical
$\mathrm{PPL}_A \approx 23.9$ and $\mathrm{PPL}_B \approx 20.4$;
the two configurations that freeze old weights sit higher on both
axes. Freezing the old weight matrices, not the old growth gates,
is the binding constraint that over-restricts plasticity.

\paragraph{Ablation 3 (replay fraction).}
Two endpoints: $\rho = 0$ (collapses Hybrid loss to Isolation; reuses
Gate-FP Isolation) and $\rho = 0.5$ (default Hybrid; reuses
Gate-FP Hybrid). Both directly read off Table~\ref{tab:cl_matrix}.
Interpolating values $\rho \in (0, 0.5)$ are not run; the two-point
contrast already supports the message that adding CE-on-$\mathcal{D}_A$
on top of Isolation hurts both axes.

\paragraph{Ablation 4 (gate-init / warmup).}
All three rows complete. The $\alpha_0 = 0$ row uses zero
gate-warmup ($\epsilon = 0$) so the literal $\alpha_0 = 0$
condition is honestly tested; gradient flow to new-block weights
then depends entirely on the scalar gate opening through its own
gradient. The $\alpha_0 = 0.01$ row starts from a non-zero gate
($\delta_f = O(\alpha_0) \approx 3 \times 10^{-3}$ in
$\mathrm{PPL}_A$ at growth time) and benefits from immediate
gradient flow into new-block internal weights, yielding lower
$\mathrm{PPL}_B$ ($27.77$ vs.\ $29.13$). The $\alpha_0 = 0.1$ row
recovers the original base $\mathrm{PPL}_A$ post-CL
($\Delta_A = -0.44$ relative to its drifted post-growth baseline),
indicating that under isolation, CL training can absorb the small
approximate-FP perturbation introduced by $\alpha_0 > 0$ while
benefiting from faster gradient flow into new-block weights.

\paragraph{Ablation 5 (growth timing): preservation is timing-robust.}
We grow the base model after $25$\%, $50$\%, $75$\%, and $100$\% of
the base-training schedule (i.e.\ at $2.5$, $5$, $7.5$, $10$ epochs
on $\mathcal{D}_A$), then run the standard 10-epoch CL phase under
isolation. The $100$\% row reuses
\texttt{exp1\_gate\_fp\_isolation}, since
$100$\%-then-grow-then-CL is identical to the headline configuration
of Table~\ref{tab:cl_matrix}; we do not duplicate the run. Across all
four timings, $|\Delta_A| < +0.08$, confirming that the preservation
property of gate-zero growth + isolation is robust to base-training
maturity --- the rank-separation mechanism does not require the base
to be fully converged.

\begin{table}[H]
  \centering
  \caption{Ablation 5 (growth timing). The $100$\% row is the Gate-FP
           Isolation entry from Table~\ref{tab:cl_matrix} and is not
           re-run. Pre-CL $\mathrm{PPL}_A$ varies non-monotonically with
           timing because mid-schedule base checkpoints can have lower
           validation perplexity than the final-schedule checkpoint
           (the $100$\% base trains $10$ full epochs and may sit past
           its best-validation point on WikiText-103). The preservation
           property holds in all four cases: $|\Delta_A| < +0.08$.}
  \label{tab:abl_timing_appendix}
  \small
  \begin{tabular}{ccccc}
    \toprule
    \textbf{Growth timing} & \textbf{Pre-CL $\mathrm{PPL}_A$} &
      \textbf{Post-CL $\mathrm{PPL}_A$} $\downarrow$ &
      \textbf{Post-CL $\mathrm{PPL}_B$} $\downarrow$ &
      $\boldsymbol{\Delta_A}$ \\
    \midrule
    $25$\%                  & 28.31 & 28.39 & 30.33 & $+0.074$ \\
    $50$\%                  & 20.85 & 20.90 & 27.13 & $+0.052$ \\
    $75$\%                  & 21.56 & 21.61 & 27.02 & $+0.053$ \\
    $100$\% (= Gate-FP Iso) & 25.92 & 25.96 & 28.41 & $+0.043$ \\
    \bottomrule
  \end{tabular}
\end{table}

\paragraph{Reading.}
Two observations. First, $\Delta_A$ stays in the same narrow band
($+0.04$ to $+0.07$) regardless of when growth happens, supporting
the claim that the rank-separation mechanism is a structural property
of the growth operator rather than an artefact of any particular base
configuration. Second, the non-monotonicity of Pre-CL
$\mathrm{PPL}_A$ across timings is consistent with the from-scratch
baseline's overfitting trajectory (Section~\ref{sec:exp_main}, the
$\mathrm{PPL}_A = 17.40$ at epoch~$3$ vs.\ $43.23$ at epoch~$10$
phenomenon): mid-schedule checkpoints can have lower validation
perplexity than the final-schedule checkpoint. The fact that
Gate-FP + isolation preserves whichever Pre-CL state it starts from
is the relevant invariant, not the absolute perplexity level.

\section{Implementation Details}
\label{app:impl}

\paragraph{Architecture.}
Base model: GPT-style decoder-only Transformer with $d_{\mathrm{model}}=1024$,
$n_{\mathrm{heads}}=16$, $d_{\mathrm{ffn}}=4096$, $L=12$ layers,
$L_{\max}=1024$ context, GPT-2 BPE tokenizer~\citep{radford2019gpt2}
(vocab $50{,}257$), SwiGLU FFN~\citep{shazeer2020glu},
RMSNorm~\citep{zhang2019rmsnorm}, rotary positional
encoding~\citep{su2021roformer}. Gate-zero variants
add scalar growth gates ($\alpha_\ell$, head-gate, ffn-gate, expert-gate).
MoE base: same backbone with $4$ experts, top-$k=2$ routing,
$d_{\mathrm{expert}}=4096$, load-balancing loss weight $\lambda_{\mathrm{aux}}=0.01$.

\paragraph{Base training (Phase 1).}
$\mathcal{D}_A = $ WikiText-103~\citep{merity2017wikitext}
(training split, $\sim$118M tokens after tokenization). $10$ epochs.
AdamW~\citep{loshchilov2019adamw} with $\beta = (0.9, 0.999)$, weight
decay $0.1$ on non-gate params, $0.0$ on gate params. Learning rate
$3 \times 10^{-4}$ (non-gate) and $1 \times 10^{-3}$ (gate), linear
warmup over $500$ steps then cosine decay to $0$. Microbatch size $2$,
gradient accumulation $64$ (effective batch $128$). Mixed-precision
(fp16). Gradient clipping: $\|g\|_2 \leq 1.0$ on non-gate params,
$\leq 0.1$ on gate params. Random seed $42$.

\paragraph{Growth (Phase 2).}
Gate-FP: depth growth factor $g = 4$ (12 $\to$ 48 layers); new blocks
cloned from existing blocks round-robin with i.i.d.\ Gaussian noise
$\sigma = 0.01$ on weights; $\alpha_0 = 0$ for new block-gates by
default (Ablation 4 also tests $\alpha_0 \in \{0.01, 0.1\}$). Append
pattern: new blocks stacked after old. MoE: $g = 2$ (12 $\to$ 24)
plus $4$ new experts per existing MoE layer with expert-gate $= 0$.
$G_{\text{stack}}$: $g = 4$ block duplication without gating (per Du
et al.~2024).

\paragraph{Continual learning (Phase 3).}
$\mathcal{D}_B = $ BookCorpus~\citep{zhu2015bookcorpus}, capped at
$118{,}000{,}000$ tokens to match $|\mathcal{D}_A|$. $10$ CL epochs. Optimizer reset; CL learning
rate $1 \times 10^{-4}$ (non-gate) and $5 \times 10^{-5}$ (gate),
linear warmup $500$ steps then cosine decay. CL gate learning rate is
deliberately low ($\propto 1 / \sqrt{n_{\mathrm{new}}}$) to bound
cumulative gate perturbation; new-block gates are clamped to
$|\alpha_\ell| \leq 1 / n_{\mathrm{new}}$ each step to prevent
gate explosion. Replay buffer: random sample of $10\%$ of
$\mathcal{D}_A$ training set ($\approx 11{,}500$ sequences), reshuffled
each epoch; replay microbatch size $2$ (capped to $1$ for MoE due to
memory). Distillation hyperparameters: $\lambda = 0.5$, $T = 2.0$
(effective KL weight $\lambda T^2 = 2.0$). Hybrid replay fraction
$\rho = 0.5$. For Isolation, gate warmup $\epsilon = 1 \times 10^{-3}$
applied at CL start when $\alpha_0 = 0$ (set to $0$ for the
$\alpha_0 = 0$ row of Ablation 4). All gradient-clipping values from
Phase 1 carried over.

\paragraph{Evaluation.}
$\mathrm{PPL}$ on WikiText-103 validation ($\sim 247$K tokens) and
BookCorpus validation ($\sim 500$K tokens), microbatch $2$, fp16.
Reported $\Delta_A = \mathrm{PPL}_A^{\mathrm{post-CL}} -
\mathrm{PPL}_A^{\mathrm{pre-CL}}$ (post-growth, pre-CL baseline). FP
verification: max absolute logit difference between pre- and
post-growth models on a verification batch of $2 \times 64$ random
tokens; passing threshold $10^{-5}$ for dense, $10^{-4}$ for MoE
(MoE top-$k$ routing introduces small numerical drift).

\paragraph{Geometry estimation (Section~\ref{app:manifold}).}
Gradient covariance rank: per-example gradient matrix over $N_b$
mini-batches, projected to $100$ dimensions via random projection,
followed by truncated SVD; effective rank reported as exponential
entropy of normalized singular values. Hessian top eigenvalues:
Lanczos-style Hessian-vector products with $20$ iterations, single
batch, CPU offload. We adopt this stochastic protocol because exact
large-batch second-order measurement is infeasible at 1.14B-scale on
single-GPU memory; absolute values should therefore be read as coarse
local diagnostics, with the gradient-covariance rank statistic the
more stable primary measure. The ranking across checkpoints
(Table~\ref{tab:manifold}) is robust to varying $N_b$ from $20$ to
$256$ in our pilot runs.

\paragraph{Compute.}
All runs on a single NVIDIA L20 GPU (48~GB) per configuration. Total
compute across the $2 \times 5$ CL matrix, MoE experiments, and
ablations is approximately $2{,}500$ GPU-hours.

\paragraph{Reproducibility caveats.}
Single seed per cell of Tables~\ref{tab:cl_matrix},~\ref{tab:moe},
and~\ref{tab:ablations} due to compute cost. Multi-seed validation at
smaller scale is in progress and will be reported in supplementary
material. We also test only one dataset ordering
($\mathcal{D}_A \to \mathcal{D}_B$); reverse ordering is left as future
work.

\end{document}